\newcommand{\cmark}{\ding{51}}%
\newcommand{\xmark}{\ding{55}}%
\def\BibTeX{{\rm B\kern-.05em{\sc i\kern-.025em b}\kern-.08em
    T\kern-.1667em\lower.7ex\hbox{E}\kern-.125emX}}
\newcolumntype{L}[1]{>{\raggedright\let\newline\\\arraybackslash\hspace{0pt}}m{#1}}
\newcolumntype{C}[1]{>{\centering\let\newline\\\arraybackslash\hspace{0pt}}m{#1}}
\newcolumntype{F}[1]{>{\let\newline\\\arraybackslash\hspace{0pt}}m{#1}}
\definecolor{tudelft-fuchsia}{cmyk}{0.19,1,0,0.19}
\definecolor{tudelft-cyan}{cmyk}{1,0,0,0}
\newtheorem{corollary}{Corollary}
\newtheorem{lemma}{Lemma}
\newtheorem{example}{Example}
\begin{document}
\title{Active Inference and Behavior Trees for \\ Reactive Action Planning and Execution in Robotics }

\author{Corrado Pezzato, Carlos Hern\'andez Corbato, Stefan Bonhof, and Martijn Wisse% <-this % stops a space
\thanks{This research was supported by Ahold Delhaize. All content represents the opinion of the authors, which is not necessarily shared or endorsed by their respective employers and/or sponsors}% <-this % stops a space
\thanks{Corrado Pezzato, Carlos Hern\'andez Corbato, Stefan Bonhof, and Martijn Wisse are with the Cognitive Robotics Department,
        TU Delft, 2628 CD Delft, The Netherlands
        {\tt\small \{c.pezzato, c.h.corbato, s.d.bonhof, m.wisse\}@tudelft.nl}} %
}

% The paper headers
\markboth{IEEE Transactions on Robotics. Accepted, November 2022}
{Pezzato \MakeLowercase{\textit{et al.}}: Active Inference and Behavior Trees for Reactive Action Planning and Execution in Robotics} 

% make the title area
\maketitle
\begin{abstract}
We propose a hybrid combination of active inference and behavior trees (BTs) for reactive action planning and execution in dynamic environments, showing how robotic tasks can be formulated as a free-energy minimization problem. The proposed approach allows handling partially observable initial states and improves the robustness of classical BTs against unexpected contingencies while at the same time reducing the number of nodes in a tree. In this work, we specify the nominal behavior offline, through BTs. However, in contrast to previous approaches, we introduce a new type of leaf node to specify the desired state to be achieved rather than an action to execute. The decision of which action to execute to reach the desired state is performed online through active inference. This results in continual online planning and hierarchical deliberation. By doing so, an agent can follow a predefined offline plan while still keeping the ability to locally adapt and take autonomous decisions at runtime, respecting safety constraints. We provide proof of convergence and robustness analysis, and we validate our method in two different mobile manipulators performing similar tasks, both in a simulated and real retail environment. The results showed improved runtime adaptability with a fraction of the hand-coded nodes compared to classical BTs.  
\end{abstract}

\begin{IEEEkeywords}
Active Inference, Reactive Action Planning, Behavior Trees, Mobile Manipulators, Biologically-Inspired Robots, Free-energy Principle
\end{IEEEkeywords}

\IEEEpeerreviewmaketitle
\section{Introduction}
\IEEEPARstart{D}{eliberation} and reasoning capabilities for acting are crucial parts of online robot control, especially when operating in dynamic environments to complete long-term tasks. Over the years, researchers developed many task planners with various degrees of optimality, but little attention has been paid to actors \cite{ghallab2014,nau2015}, i.e. algorithms endowed with reasoning and deliberation tools during plan execution. Works as \cite{ghallab2014,nau2015} advocate a change in focus, explaining why this lack of actors could be one of the main causes of the limited spread of automated planning applications. Authors in \cite{Colledanchise2017,Colledanchise2019,safronov2020task} proposed the use of BTs as graphical models for more reactive task execution, showing promising results. Other authors also tried to answer the call for actors \cite{Paxton2019, Garrett2020}, but there are still open challenges to be addressed. These challenges have been identified and highlighted by many researchers, and can be summarized as in \cite{ghallab2014, Colledanchise2019} in two properties that an actor should possess:
\begin{itemize}
    \item \textit{Hierarchical deliberation}: each action in a plan may be a task that an actor may need to further refine online.
    \item \textit{Continual online planning and reasoning}: an actor should monitor, refine, extend, update, change, and repair its plans throughout the acting process, generating activities dynamically at run-time.
\end{itemize}

Actors should not be mere action executors then, but they should be capable of intelligently taking decisions. This is particularly useful for challenging problems such as mobile manipulation in dynamic environments, where actions planned offline are prone to fail. In this paper, we consider mobile manipulation tasks in a retail environment with a partially observable initial state. We propose an actor based on active inference. Such an actor is capable of following a task planned offline while still being able of taking autonomous decisions at run-time to resolve unexpected situations.  

Active inference is a neuroscientific theory that has recently shown its potential in control engineering and robotics \cite{anil_colored_noise, baioumy2020active,pezzato2020active,baioumy2021fault}, particularly in real-world experiments for low-level adaptive control \cite{pezzato2020novel, oliver}. Active inference describes a biologically plausible algorithm for perception, action, planning, and learning. This theory has been initially developed for continuous processes \cite{friston2, buckley, tutorial} where the main idea is that the brain's cognition and motor control functions could be described in terms of \textit{free-energy minimization} \cite{friston1}. In other words, we, as humans, take actions in order to fulfill prior expectations about a \textit{desired prior} sensation \cite{friston3}. Active inference has also been extended to Markov decision processes for discrete decision making  \cite{friston2012active}, recently gathering more and more interest \cite{friston2017active, sajid2021active, schwartenbeck2019computational, Kaplan2018}. In this formulation, active inference is proposed as a unified framework to solve the exploitation-exploration dilemma by acting to minimize the free-energy. Agents can solve complicated problems once provided a \textit{context sensitive prior} about preferences. Probabilistic beliefs about the state of the world are built through Bayesian inference, and a finite horizon plan is selected in order to maximize the evidence for a model that is biased toward the agent’s preferences. At the time of writing, the use of discrete Active Inference for symbolic action planning is limited to low dimensional and simplified simulations \cite{schwartenbeck2019computational, Kaplan2018}. In addition, current solutions rely on fundamental assumptions such as instantaneous actions without preconditions, which do not hold in real robotic situations.

To tackle these limitations of active inference and to address the two main open challenges of hierarchical deliberation and continual planning, we propose a hybrid combination of active inference and BTs. We then apply this new idea to mobile manipulation in a dynamic retail environment. 

\subsection{Related Work}
In this section, we mainly focus on related work on \textit{reactive action planning and execution}, a class of methods that exploits reactive plans, which are stored structures that contain the behavior of an agent. To begin with, BTs \cite{Colledanchise2018, Colledanchise2017} gathered increasing popularity in robotics to specify reactive behaviors. {BTs are de-facto replacing finite state machines (FSM) in several state-of-the-art systems, for instance in the successor of the ROS Navigation Stack, Nav2.  \cite{macenski2020marathon}} BTs are graphical representations for action execution. The general advantage of BTs is that they are modular and can be composed into more complex higher-level behaviors, without the need to specify how different BTs relate to each other. They are also an intuitive representation that modularizes other architectures such as FSM and decision trees, with proven robustness and safety properties \cite{Colledanchise2017}. These advantages and the structure of BTs make them particularly suitable for the class of dynamic problems we are considering in this work, as explained later on in Section \ref{sec:Background}. However, in classical formulations of BTs, the plan reactivity still comes from hard-coded recovery behaviors. This means that highly reactive BTs are usually big and complex and that adding new robotic skills would require revising a large tree. To partially cope with these problems, the authors in \cite{Colledanchise2019} proposed a blended reactive task and action planner which dynamically expands a BT at runtime through back-chaining. The solution can compensate for unanticipated scenarios, but cannot handle partially observable environments and uncertain action outcomes. Conflicts due to contingencies are handled by prioritizing (shifting) sub-trees. Authors in \cite{safronov2020task} extended \cite{Colledanchise2019} and showed how to handle uncertainty in the BT formulation as well as planning with non-deterministic outcomes for actions and conditions. Other researchers combined the advantages of BTs with the theoretical guarantees on the performance of planning with the Planning Domain Definition Language (PDDL) {by representing robot task plans as
robust logical-dynamical systems (RLDS) \cite{Paxton2019}}. RLDS results in a more concise problem description with respect to using BTs only, but online reactivity is again limited to the scenarios planned offline. For unforeseen contingencies, re-planning would be necessary, which is more resource-demanding than reacting as shown in the experimental results in \cite{Paxton2019}. {The output of RLDS is equivalent to a BT, yet BTs remain more intuitive to compose and have more support from the community with open-source libraries and design tools.}

Goal-Oriented Action Planning (GOAP) \cite{Orkin2003}, instead, focuses on online action planning. This technique is used for non-player-characters (NPC) in video games \cite{Orkin2006}. Goals in GOAP do not contain predetermined plans. Instead, GOAP considers atomic behaviors which contain preconditions and effects. The general behavior of an agent can then be specified through very simple Finite State Machines (FSMs) because the transition logic is separated from the states themselves. GOAP generates a plan at run-time by searching in the space of available actions for a sequence that will bring the agent from the starting state to the goal state. However, GOAP requires hand-designed heuristics that are scenario-specific and it is computationally expensive for long-term tasks. 

Hierarchical task Planning in the Now (HPN) \cite{Kaelbling2011} is an alternate plan and execution algorithm where a plan is generated backward starting from the desired goal, using A*. HPN recursively executes actions and re-plans. To cope with the stochasticity of the real world, HPN has been extended to belief HPN (BHPN) \cite{PackKaelbling2013}. A follow-up work \cite{Levihn2013} focused on the reduction of the computational burden by implementing \textit{selective re-planning} to repair local poor choices or exploit new opportunities without the need to re-compute the whole plan which is very costly since the search process is exponential in the length of the plan. 

A different method for generating plans in a top-down manner is the Hierarchical Task Network (HTN) \cite{Erol1994}. At each step, a high-level task is refined into lower-level tasks. In practice \cite{Ghallab2016}, the planner exploits a set of standard operating procedures for accomplishing a given task. The planner decomposes the given task by choosing among the available ones until the chosen primitive is directly applicable to the current state. Tasks are then iteratively replaced with new task networks. An important remark is that reactions to failures, time-outs, and external events are still a challenge for HTN planners. HTN requires the designer to write and debug potentially complex domain-specific recipes and, in very dynamic situations, re-planning might occur too often. 

Finally, active inference is a normative principle underwriting perception, action, planning, decision-making, and learning in biological or artificial agents. Active inference on discrete state spaces \cite{DaCosta2020} is a promising approach to solving the exploitation-exploration dilemma and empowers agents with continuous deliberation capabilities. The application of this theory, however, is still in an early stage for discrete decision making where current works only focus on simplified simulations as proof of concept \cite{friston2017active, Kaplan2018, sajid2021active, schwartenbeck2019computational}. In \cite{Kaplan2018}, for instance, the authors simulated an artificial agent which had to learn and solve a maze given a set of simple possible actions to move (\textit{up, down, left, right, stay}). Actions were assumed instantaneous and always executable. In general, current discrete active inference solutions lack a systematic and task-independent way of specifying prior preferences, which is fundamental to achieving a meaningful behavior, and they never consider action preconditions that are crucial in real-world robotics. As a consequence, plans with conflicting actions which might arise in dynamic environments are never addressed in the current state-of-the-art.

\subsection{Contributions}
In this work, we propose the hybrid combination of BTs and active inference to obtain more reactive actors with hierarchical deliberation and continual online planning capabilities. We introduce a method to include action preconditions and conflict resolution in active inference, as well as a systematic way of providing prior preferences through BTs. The proposed hybrid scheme leverages the advantages of online action selection with active inference and it removes the need for complex predefined fallback
behaviors while providing convergence guarantees. In addition, we provide extensive mathematical derivations, examples, and code, {to understand, test, and reproduce our findings.}

\subsection{Paper structure}
The remainder of the paper is organized as follows. In Section \ref{sec:Background} we provide an extensive background on active inference and BTs. Our novel hybrid approach is presented in Section \ref{sec:Solution}, and its properties in terms of robustness and stability
are analyzed in Section~\ref{sec:Analysis}. In Section~\ref{sec:Experiments} we report the experimental evaluation, showing how our solution can be used with different mobile manipulators for different tasks in a retail environment. Finally, Section~\ref{sec:Discussion} contains the discussion, and Section~\ref{sec:Conclusions} the conclusions. 
\section{Background on Active Inference and BTs}
\label{sec:Background}
\subsection{Background on Active Inference}
Active inference provides a unifying theory for perception, action, decision-making, and learning in biological or artificial agents \cite{DaCosta2020}. {Our active inference agent rests on the tuple $(\mathcal{O},\mathcal{S},\mathcal{A},P,Q)$. This is composed of: a finite set of observations $\mathcal{O}$, a finite set of states $\mathcal{S}$, a finite set of actions $\mathcal{A}$, a generative model $P$ and an approximate posterior $Q$.}

{Active inference proposes a solution for action and perception by assuming that actions will fulfill predictions that are based on inferred states of the world, given some observations. The generative model contains beliefs about future states and action plans, where plans that lead to preferred observations are more likely. Perception and action are achieved through the optimization of two complementary objective functions, the variational free-energy $F$, and the expected free-energy $G$. These quantities to optimize are derived based on the generative model and the approximate posterior, as detailed later.} Variational free-energy measures the fit between the generative model and past and current sensory observations, while expected free-energy scores future possible courses of action according to prior preferences and predicted observations. {Fig.~\ref{fig:genIdea} depicts the general high level idea.} {For a first time reader of active inference, we advise consulting \cite{friston2017active} for a more extensive introduction.}
\begin{figure}[!htb]
    \centering
    \includegraphics[width=0.42\textwidth]{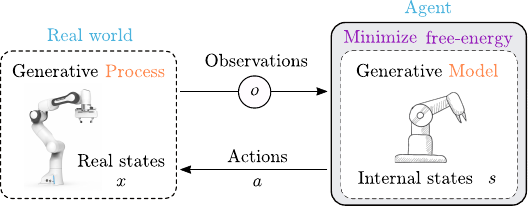}
    \caption{{High-level visualization of an active inference agent. The generative process describes the true causes of the agent's observations $o$. An agent can apply actions to change the state of the world and to get observations that are aligned with its internal preferences.}}
    \label{fig:genIdea}
\end{figure}

In the following, we {explain the form of the generative model and how the model parameters relate to states, actions, and observations. Based on this model, we present the expressions for the free-energy and expected free-energy that are used to derive the equations for perception and decision making. We complement the theory with} {\textit{pen-and-paper} examples and associated Python code\footnote{\url{https://github.com/cpezzato/discrete_active_inference/blob/main/discrete_ai/scripts/paper_examples.py}}.} All the other necessary mathematical derivations are added in the appendices. 

\subsubsection{Generative model}
{In active inference \cite{friston2017active}, the generative model $P$ is chosen to be a Markov process that allows to infer the states of the environment and to predict the effects of actions as well as future observations. This is expressed as a joint probability distribution $P(\bar{o},\bar{s},\bm \eta,\pi)$, where $\bar{o}$ is a sequence of observations, $\bar{s}$ is a sequence of states, $\bm \eta$ represents model parameters, and $\pi$ is a plan. In particular, once given fixed model parameters $\bm \eta$ for a task\footnote{{Note that, in the general case, model parameters are not fixed and can be updated as well through active inference \cite{friston2017active}.}}, we can write:
\begin{equation}
    P(\bar{o},\bar{s},\pi) = P(\pi)\prod_{\tau=1}^T P(s_\tau|s_{\tau-1},\pi)P(o_\tau|s_\tau)
    \label{eq:factorizedModel}
\end{equation}}
The full derivation of the generative model in eq.~\eqref{eq:factorizedModel}, the assumptions under its factorization, and how this joint probability is used to define the free-energy $F$ can be found in Appendix \ref{sec:AppendixP}.
{The probability distributions in eq.~\eqref{eq:factorizedModel} are represented internally by an active inference agent through the following parameters:
\begin{itemize}
    \item $\bm A \in [0, 1]^{r, m}$ is a matrix representation of the conditional probability $P(o_\tau|s_\tau)$, where $r$ is the number of possible observations and $m$ the number of possible states. $\bm A$ is also called the likelihood matrix, and it indicates the probability of observations given a specific state. Each column of $\bm A$ is a categorical distribution. It holds that $P(o_\tau|s_\tau,\bm A)=Cat(\bm As_\tau)$. For a generic entry $\bm A_{ij} = P(o_\tau=i | s_\tau=j)$.
    \item $\bm B \in [0, 1]^{m, m}$ represents a transition matrix. In particular $P(s_{\tau+1}|s_\tau,\ a_\tau)= Cat(\bm B_{a_\tau} s_{\tau})$. For a symbolic action $a_\tau$ in a plan $\pi$, $\bm B_{a_\tau}$ represents the probability of state $s_{\tau+1}$ while applying action $a_\tau$ from state $s_\tau$. The columns of $\bm B_{a_\tau}$ are categorical distributions. 
    \item $\pi$ is a sequence of actions over a time horizon $T$. $\bm \pi$ is the posterior {distribution}, a vector holding the probability of different plans.
    These probabilities depend on the expected free-energy in future time steps under plans given the current belief: $P(\pi) = \sigma(-G(\pi))$. Here, $\sigma$ indicates the softmax function used to normalize probabilities.
\end{itemize}}
{An active inference agents contains also a model $\bm D \in [0,1]^m$ that represents the belief about the initial state at $\tau=1$. So $P(s_0)=Cat(\bm D)$. Additionally, an agent also represents prior preferences about desired observations for goal-directed behavior in $\bm C \in \mathbb{R}^r$, such that $P(o_\tau) = \bm C$.}

In the context of this paper, we do not consider additional generative model parameters such as the $\bm E$ vector to encode priors over plans used to represent habits \cite{smith2021step, hesp2021}. The parameter $\bm E$ could be used to include common sense knowledge in the decision making process, but this will be addressed in future work. Table~\ref{tab:Symbols} summarises the notation adopted in this paper. {The top part contains quantities computed by active inference, while the bottom part the domain parameters required}. 
{As explained next, these internal models will be used by an active inference agent to compute the free-energy and the expected free-energy.}

\begin{table}[ht]
\caption{Notation for Active Inference}% title of Table
\centering % used for centering table
\begin{tabular}{C{2.3cm} C{5.7cm}} % 
\hline\hline %inserts double horizontal lines
\textbf{Symbol} & \textbf{Description}\\ [0.5ex] % 
\hline % inserts single horizontal line
{$s_\tau \in \{0, 1\}^m$} & {One-hot encoding of the hidden state at time $\tau$, where $m$ is the number of mutually exclusive states in the discrete state space.} \\

{$\bm s_\tau^\pi \in [0, 1]^m$} & {Posterior distribution over the state under a plan $\pi$, where the elements sum up to one.}\\

{$o_\tau \in \{0, 1\}^r$} & {Observation at time $\tau$ that can have $r$ mutually exclusive possible values.}\\

{$\bm o_\tau^\pi \in [0, 1]^r$} & Posterior distribution of observations under a plan.\\

$\pi$& Plan specifying a sequence of {symbolic} actions $\pi = [a_\tau,\ a_{\tau+1}, ..., a_T]^\top$, where $T$ is the time horizon.\\

{$\bm \pi \in [0,1]^p$}& {Posterior distribution over plans, where $p$ is the number of different plans.}\\

{$F(\pi) \in \mathbb{R}$}& {Plan specific variational free-energy}.\\ 

{$\bm F_\pi \in \mathbb{R}^p$} & {$\bm F_\pi = (F(\pi_1),F(\pi_2),...)^\top$} is a column vector containing the free-energy for every plan.\\ 

{$G(\pi,\tau) \in \mathbb{R}$} & Expected free-energy for a plan at time $\tau$.\\ 

$\bm G_\pi \in \mathbb{R}^p$ & $\bm G_\pi = (G(\pi_1),G(\pi_2),...)^\top$ is a column vector containing the expected free-energy for every plan.\\ 

\hline

{$\bm A \in [0,1]^{r\times m}$} &  Likelihood matrix, mapping from hidden states to
observations $P(o_\tau|s_\tau, \bm A)= Cat(\bm As_\tau).$ \\

{$\bm B_{a_\tau} \in [0,1]^{m\times m}$ }& Transition matrix,  $P(s_{\tau+1}|s_\tau,\ a_\tau)= Cat(\bm B_{a_\tau} s_{\tau})$.\\

{$\bm C \in \mathbb{R}^r$}& Prior preferences over observations $P(o_\tau) = \bm C$.\\

{$\bm D \in [0,1]^m$} & Prior over initial states {$P(s_0)= Cat(\bm D)$}.\\

$\sigma$ & Softmax function.\\

\hline %inserts single line
\end{tabular}
\label{tab:Symbols}
\end{table}

\subsubsection{Variational Free-energy}
Given the generative model as before, one can derive an expression for the variational free-energy. By minimizing $F$, an agent can determine the most likely hidden states given sensory information. The expression for $F$ is given by:
\begin{eqnarray}
    {F(\pi) = \sum_{\tau=1}^T \bm s_\tau^{\pi\top} \bigg[\ln{\bm s_\tau^\pi} - \ln{{(}\bm B_{a_{\tau-1}}\bm s_{\tau-1}^{\pi}{)}} - \ln{{(}\bm A^\top \bm o_\tau{)}} \bigg]}
    \label{eq:free-energy}
\end{eqnarray}
where $F(\pi)$ is a plan specific free-energy. The logarithm is considered element-wise. For the derivations please refer to Appendix \ref{sec:AppendixF}.

\subsubsection{Perception}
According to active inference, both perception and decision making are based on the minimization of free-energy. In particular, for state estimation, we take partial derivatives of $F$ with respect to the states and set the gradient to zero. The {posterior distribution} of the state, conditioned by a plan, is given by:
{
\begin{subequations}
\label{eq:s_taupi}
\begin{gather}
    \bm s_{\tau = 1}^{\pi} = \sigma(\ln{\bm D} + \ln{{(}\bm B^\top_{a_{\tau}} \bm s_{\tau+1}^{\pi}{)}} + \ln{{(}\bm A^\top \bm o_\tau{)})}\\
    \bm s_{1 < \tau < T}^{\pi} = \sigma(\ln{{(}\bm B_{a_{\tau-1}}\bm s_{\tau-1}^{\pi}}{)} + \ln{{(}\bm B^\top_{a_{\tau}} \bm s_{\tau+1}^{\pi}{)}} + \ln{{(}\bm A^\top \bm o_\tau{)})}\\
    \bm s_{\tau = T}^{\pi} = \sigma(\ln{{(}\bm B_{a_{\tau-1}}\bm s_{\tau-1}^{\pi}}{)} + \ln{{(}\bm A^\top \bm o_\tau{)})}
\end{gather}
\end{subequations}}
where $\sigma$ is the softmax function. {The column of $\bm B^\top_{a_{\tau}}$ are normalized.} For the complete derivation, please refer to Appendix~\ref{sec:AppendixS}. {Note that when $\tau = 1$ it holds $\ln{{(}\bm B_{a_{\tau-1}}\bm s_{\tau-1}^{\pi}}{)} = \ln{\bm D}$. We provide below an example of state update and highlight the effect of uncertain action outcomes in the state estimation process.}

{
\begin{example} \textbf{\underline{State estimation}:}
An active inference agent lives in a simple world composed of one state which can have two possible values. The only action the agent can do is to stay still ($\bm B_{idle}$), so $\pi = a_\tau \forall \tau$ with $a_\tau = idle$. However, there are some chances that unwanted transitions between states can occur. The agent can only predict one step ahead ($T=2$) and it receives an observation $\bm o_{\tau=1}$ at the start, that is related to the state through $\bm A$. The agent has no prior information about the initial state or future observations, thus $\bm D$ is uniform as well as the initial guess about the posterior distributions of the state. 
This is modeled as follows:
\begin{equation}
\nonumber
     \bm A = \begin{bmatrix}
        0.9 & 0.1 \\
        0.1 & 0.9
    \end{bmatrix},\ 
    \bm B_{idle} = \begin{bmatrix}
        0.8 & 0.2 \\
        0.2 & 0.8
    \end{bmatrix},\ 
    {\bm D} = \begin{bmatrix} 0.5\\ 0.5
    \end{bmatrix} 
\end{equation}
\begin{equation}
\nonumber
    \bm o_{\tau = 1} = \begin{bmatrix} 1\\ 0
    \end{bmatrix},\ 
     \bm o_{\tau = 2} = \begin{bmatrix} 0\\ 0
    \end{bmatrix},\ 
    \bm s_{\tau = 1}^{\pi} = \bm s_{\tau = 2}^{\pi} = \begin{bmatrix} 0.5\\ 0.5
    \end{bmatrix} 
\end{equation}
The update of the posterior distribution for state estimation is, according to \eqref{eq:s_taupi}:
{
\begin{eqnarray}
\nonumber
    \bm s_{\tau = 1}^{\pi} & = & \sigma\left(\ln\begin{bmatrix} .5\\ .5
    \end{bmatrix} + \ln\left(\begin{bmatrix}
        .8 & .2 \\
        .2 & .8
    \end{bmatrix}\begin{bmatrix} .5\\ .5
    \end{bmatrix}\right)\right.\\ 
    \nonumber
    & + & \left. \ln\left(\begin{bmatrix}
        .9 & .1 \\
        .1 & .9
    \end{bmatrix} \begin{bmatrix} 1\\ 0
    \end{bmatrix}\right)\right) = \begin{bmatrix} .9\\ .1 \end{bmatrix}\\
\nonumber
    \bm s_{\tau = 2}^\pi & = & \sigma\left(\ln \left( \begin{bmatrix}
        .8 & .2 \\
        .2 & .8
    \end{bmatrix}\begin{bmatrix} .9\\ .1
    \end{bmatrix}\right) + \ln \left( \begin{bmatrix}
        .9 & .1 \\
        .1 & .9
    \end{bmatrix} \begin{bmatrix} 0\\ 0
    \end{bmatrix}\right)\right) \\
    \nonumber
    &=& \begin{bmatrix} .74\\ .26 \end{bmatrix}
\end{eqnarray}}
As commonly done in active inference literature, {a small number (for instance $e^{-16}$ \cite{smith2021step}) is added when computing the logarithms. This prevents numerical errors in case of $\ln(0)$.}
Note that, if there would be no uncertainty on the actions the agent can take, i.e. $\bm B_{idle}$ is the identity in this case, then the estimated hidden state at $\tau = 2$ would be $\bm s_{\tau = 2}^{\pi} = [0.9, 0.1]^\top$. The agent would then be more confident under this action.
\end{example}
}

\subsubsection{Expected Free-energy}
Active inference unifies action selection and perception by assuming that actions fulﬁll predictions based on inferred states. Since the internal model can be biased toward preferred states or observations (\textit{prior desires}), active inference induces actions that will bring the current beliefs towards the preferred states. An agent builds beliefs about future states which are then used to compute the expected free-energy. The latter is necessary to evaluate alternative plans. Plans that lead to preferred observations are more likely. Preferred observations are specified in the model parameter $\bm C$. This enables action to realize the next (proximal) \textit{observation} predicted by the plan that leads to (distal) goals. The expected free-energy for a plan $\pi$ at time $\tau$ is given by:
\begin{equation}
        {{G(\pi,\tau) = \underbrace{\bm o_\tau^{\pi\top} \left[\ln{\bm o_\tau^\pi}-\ln \bm C\right]}_{Reward\ seeking} \underbrace{- diag(\bm A ^\top \ln\bm A)^\top\bm s_\tau^\pi}_{Information\ seeking}}}
     \label{eq:expectedF}
\end{equation}
{The $diag()$ function simply takes the diagonal elements of a matrix and puts them in a column vector. This is just a method to extract the correct matrix entries in order to compute the expected free-energy \cite{smith2021step}.}
By minimizing expected free-energy the agent balances reward and information seeking (see Appendix \ref{sec:AppendixG} for derivations).
{
Reward and information-seeking behaviors that arise from the formulation of the expected free-energy are illustrated respectively in Examples \ref{ex:Gexploit} and \ref{ex:Gexplore}. To keep the examples reasonably simple to be computed by hand, we consider that the posterior over states according to different plans have already been computed following \eqref{eq:s_taupi}, and are given.}
{
\begin{example} 
\label{ex:Gexploit} \textbf{\underline{Reward seeking}:} 
Consider an agent has computed the posterior distribution of the states $\bm s_\tau^{\pi_1}$, $\bm s_\tau^{\pi_2}$ under two different plans $\pi_1$ and $\pi_2$. The agent has a preference for a particular value of the state encoded in $\bm C$. The model for this example is the following:
\begin{equation}
\nonumber
     \bm A = \begin{bmatrix}
        0.9 & 0.1 \\
        0.1 & 0.9
    \end{bmatrix},\ 
    {\bm C} = \begin{bmatrix} 1\\ 0
    \end{bmatrix},\ 
    \bm s_\tau^{\pi_1} = \begin{bmatrix} 0.95\\ 0.05
    \end{bmatrix},\ \bm s_\tau^{\pi_2} = \begin{bmatrix} 0.05\\ 0.95
    \end{bmatrix} 
\end{equation}
Let us consider the reward-seeking term in \eqref{eq:expectedF} (note that the information-seeking term is equal in both plans for this example). The observations expected under the two different plans are:
\begin{equation}
\nonumber
    \bm o_\tau^{\pi_1} = \bm A \bm s_\tau^{\pi_1} = \begin{bmatrix} 0.86\\ 0.14
    \end{bmatrix},\ 
    \bm o_\tau^{\pi_2} = \bm A \bm s_\tau^{\pi_2} = \begin{bmatrix} 0.14 \\ 0.86
    \end{bmatrix}
\end{equation}
Intuitively, according to $\bm C$, the first plan is preferable and it should have the lowest expected free-energy because it leads to preferred observations with higher probability. Numerically:
\begin{equation}
\nonumber
    \bm o_\tau^{\pi_1\top} \left[\ln{\bm o_\tau^{\pi_1}}-\ln \bm C\right] = \begin{bmatrix} 0.86\\ 0.14
    \end{bmatrix}^{\top} \left[\ln{\begin{bmatrix} 0.86\\ 0.14
    \end{bmatrix}}-\ln{\begin{bmatrix} 1\\ 0
    \end{bmatrix}}\right] \approx 1.84
\end{equation}
Similarly for $\pi_2$, $\bm o_\tau^{\pi_2\top} \left[\ln{\bm o_\tau^{\pi_2}}-\ln \bm C\right] \approx 13.35$. As can be noticed, the plan that brings the posterior state closest to the preference specified in $\bm C$ leads to the lowest reward-seeking term.
\end{example}}

{
\begin{example}
\label{ex:Gexplore} \textbf{\underline{Information seeking}:} Let us consider a variation of Example~\ref{ex:Gexploit}. The agent is not given any preference for a specific state, and the likelihood matrix $\bm A$ encodes now the fact that observations are expected to provide more precise information when the agent is in the second state (second column of $\bm A$). This results in the following models:
\begin{equation}
\nonumber
     \bm A = \begin{bmatrix}
        0.7 & 0.1 \\
        0.3 & 0.9
    \end{bmatrix},\ 
    {\bm C} = \begin{bmatrix} 0\\ 0
    \end{bmatrix},\ 
    \bm s_\tau^{\pi_1} = \begin{bmatrix} 0.9\\ 0.1
    \end{bmatrix},\ \bm s_\tau^{\pi_2} = \begin{bmatrix} 0.1\\ 0.9
    \end{bmatrix} 
\end{equation}
We expect that the plan which leads to a state with less ambiguous information has the lowest information-seeking term. For the first plan:
\begin{eqnarray}
\nonumber
    &-& diag(\bm A ^\top \ln\bm A)^\top\bm s_\tau^{\pi_1} = 
    \\
    \nonumber 
    &-&diag\left( \begin{bmatrix}
        0.7 & 0.3 \\
        0.1 & 0.9
    \end{bmatrix} \ln{\begin{bmatrix}
        0.7 & 0.1 \\
        0.3 & 0.9
    \end{bmatrix}} \right)^\top\begin{bmatrix} 0.9\\ 0.1
    \end{bmatrix}  \approx 0.58
\end{eqnarray}
For the second plan, the information-seeking term is instead $\approx 0.35$. The state achieved with the second plan generates less ambiguous observations. Plans that lead to the lowest ambiguity in sensory information, and thus minimize $G$, are preferred.
\end{example}
}
{
In more complex examples, minimizing $G$ leads to a balance in reward and information seeking. For a fully-fledged exploration-exploitation problem, we refer the reader to a recently released Python library for active inference \cite{pymdp} which contains an interactive and visual example\footnote{\url{https://pymdp-rtd.readthedocs.io/en/latest/notebooks/cue_chaining_demo.html}} of the emergent behavior of a rat in a grid world which has to collect cues to disclose the location of the reward.}
\paragraph{Planning and decision making}
Taking the gradient of $F$ with respect to plans, and recalling that the generative model specifies the
approximate posterior over plans as a softmax function of the expected free-energy \cite{DaCosta2020} it holds that:
\begin{equation}
    \bm \pi = \sigma(-\bm G_\pi - \bm F_\pi)
\end{equation}
{where the vector $\bm \pi$ {encodes the posterior distribution over plans reflecting the predicted value of each plan.} $\bm F_\pi = (F(\pi_1),F(\pi_2),...)^\top$ and $\bm G_\pi = (G(\pi_1),G(\pi_2),...)^\top$.} See Appendix \ref{sec:AppendixPi} for the details.
\paragraph{Plan independent state-estimation}
Given the probability over $p$ possible plans, and the plan dependent states $\bm s_\tau^\pi$, we can compute the overall probability distribution for the states over time through Bayesian Model Average:
\begin{equation}
    \label{eq:s_tau}
    \bm s_\tau = \sum_i{\bm s_\tau^{\pi_i}\bm\pi_i},\ \textnormal{where}\ i \in \{1,...,p\}
\end{equation}
were $\bm s_\tau^{\pi_i}$ is the probability of a state at time $\tau$ under plan $i$ and $\bm\pi_i$ is the probability of
plan $i$. This is the average prediction for the state at a certain time, so $\bm s_\tau$, according to the probability of each plan. In other words, this is a weighted average over different models. Models with high probability receive more weight, while models with lower probabilities are discounted. 
\paragraph{Action selection}
{The action for the agent to be executed is the first action of the most likely plan:
\begin{eqnarray}
\label{eq:a_t}
    \lambda = \max(\underbrace{[\bm\pi_{1}, \bm\pi_{2},...,\bm\pi_{p}]}_{\bm \pi^\top}),\ 
    a_\tau = \pi_\lambda(\tau = 1)
\end{eqnarray}
where $\lambda$ is the index of the most likely plan.}

\begin{example} 
\label{ex:planSel} \textbf{\underline{Plan and action selection}:}
By computing the expected free-energy of Exercise~\ref{ex:Gexploit} using \eqref{eq:expectedF} and including the information seeking term, we obtain $\bm G_\pi = [G(\pi_1), G(\pi_2)]^\top \approx [2.16, 13.68]^\top$.  For the sake of this example, let us assume the free-energy is equal for both plans, that is $\bm F_\pi = [F(\pi_1), F(\pi_2)]^\top \approx [1.83, 1.83]^\top$. The posterior distribution over plans is then:
\begin{equation}
    \nonumber
    \bm \pi = \sigma \left(-\begin{bmatrix} 2.16\\ 13.68
    \end{bmatrix} -\begin{bmatrix} 1.83\\ 1.83
    \end{bmatrix} \right) \approx \begin{bmatrix} 0.99\\ 0.01
    \end{bmatrix}
\end{equation}
As can be seen, the most likely plan is the first one, in accordance with the conclusions of Exercise~\ref{ex:Gexploit}. The action to be applied by the agent is the first action of $\pi_1$. 
\end{example}

The active inference algorithm is summarised in pseudo-code in Algorithm~\ref{alg:AIC}.
\begin{algorithm}
\caption{Action selection with active inference}\label{alg:AIC}
\begin{algorithmic}[1]
    \State \texttt{Set} $\bm C$ \Comment{prior preferences} 
\For{$\tau=1:T$}
    \State {If not specified, get state from $\bm D$ if $\tau == 1$} 
    \State If not specified, get observation from $\bm A$ 
    \State Compute $F$ for each plan\Comment{eq. \eqref{eq:free-energy}} 
    \State Update posterior state $\bm s_\tau^\pi$ \Comment{eq. \eqref{eq:s_taupi}}
    \State Compute $G$ for each plan \Comment{eq. \eqref{eq:expectedF}}
    \State Bayesian model averaging \Comment{eq. \eqref{eq:s_tau}}
    \State Action selection \Comment{eq. \eqref{eq:a_t}}
\EndFor
\State \textbf{Return} $a$ \Comment{Preferred action}
\end{algorithmic}
\end{algorithm}

\subsubsection{Multiple sets of states and observations}
{The active inference model introduced in this section can also handle multiple sets of independent states and observations \cite{smith2021step}. A famous example in the active inference literature covers a rat in a T-maze \cite{friston2017active}. The rat is seeking a reward (cheese) but the location of the cheese is only known after receiving a cue. In this case, one set of states encodes the location of the rat, while another set encodes the initially unknown location of the cheese. Each independent set of states is called a \textit{state factor}. In the same way, there can be multiple sets of observations coming from different sensors. Each set is a different \textit{observation modality}. Works as \cite{smith2021step,hesp2021} provide a step-by-step tutorial on active inference with fully worked out toy examples including multiple factors and modalities. We provide explicit models for our robotic case with multiple state factors and observations in Sec.~\ref{sec:modela_idea}. }

\subsection{Background on BTs}
We now describe the high-level concepts at the basis of BTs according to previous work such as \cite{Colledanchise2018, Colledanchise2017}. These concepts will be useful to understand the novel hybrid scheme proposed in the next section. A BT is a directed tree composed of nodes and edges that can be seen as a graphical modeling language. It provides a structured representation for the execution of actions that are based on conditions and observations in a system. The nodes in a BT follow the classical definition of parents and children. The root node is the only node without a parent, while the leaf nodes are all the nodes without children. In a BT, the nodes can be divided into control flow nodes (\textit{Fallback}, \textit{Sequence}, \textit{Parallel}, or \textit{Decorator}), and into execution nodes (\textit{Action} or \textit{Condition}) which are the leaf nodes of the tree. When executing a given BT in a control loop, the root node sends a \textit{tick} to its child.  A tick is nothing more than a signal that allows the execution of a child. The tick propagates in the tree following the rules dictated by each control node. A node returns a status to the parent, which can be \textit{running} if its execution has not finished yet, \textit{success} if the goal is achieved, or \textit{failure} in the other cases. At this point, the return status is propagated back up the tree, which is traversed again following the same rules. The most important control nodes are:
\paragraph*{Fallback nodes} A fallback node ticks its children from left to right. It returns \textit{\textit{success}} (or \textit{running}) as soon as one of the children returns \textit{success} (or \textit{running}). When a child returns \textit{success} or \textit{running}, the fallback does not tick the next child, if present. If all the children return \textit{failure}, the fallback returns \textit{failure}. This node is graphically identified by a gray box with a question mark "$?$";
\paragraph*{Sequence nodes} The sequence node ticks its children from left to right. It returns \textit{running} (or \textit{failure}) as soon as a child returns \textit{running} (or \textit{failure}). The sequence returns \textit{success} only if all the children return \textit{success}. If a child returns \textit{running} or \textit{failure}, the sequence does not tick the next child, if present. In the library, we used to implement our BTs \cite{bt_library} the \textit{sequence} node, indicated with [$\rightarrow$], keeps ticking a running child, and it restarts only if a child fails. \cite{bt_library} provides also \textit{reactive sequences} [$\rightarrow^R$] where every time a sequence is ticked, the entire sequence is restarted from the first child.

The execution nodes are \textit{Actions} and \textit{Conditions}:
\paragraph*{Action nodes} An Action node performs an action in the environment. While an action is being executed, this node returns \textit{running}. If the action is completed correctly it returns \textit{success}, while if the action cannot be completed it returns \textit{failure}. Actions are represented as  red rectangles;
\paragraph*{Condition nodes} A Condition node determines if a condition is met or not, returning \textit{success} or \textit{failure} accordingly. Conditions never return \textit{running} and do not change any states or variables. They are represented as orange ovals;

An example BT is given in Fig.~\ref{fig:btBackground}. 

\begin{figure}[!htb]
    \centering
    \includegraphics[width=0.4\textwidth]{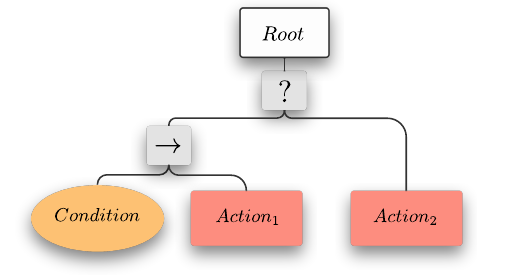}
    \caption{Example of BT. The \textit{tick} traverses the tree starting from the \textit{Root}. If $Condition$ is true $Action_1$ is executed. Then, if $Action_1$ returns success, the \textit{Root} returns success, otherwise $Action_2$ is executed.}
    \label{fig:btBackground}
\end{figure}

\section{Active Inference and BTs for Reactive Action Planning and Execution}
\label{sec:Solution}
In this section, we introduce our novel approach using BTs and active inference. Even though active inference is a very promising theory, from a computational perspective computing the expected free-energy for each possible plan that a robot might take is cost-prohibitive. This curse of dimensionality is due to the combinatorial explosion when looking deep into the future \cite{DaCosta2020}. To solve this problem, we propose to replace \textit{deep plans} with \textit{shallow decision trees} that are \textit{hierarchically composable}. This will allow us to simplify our offline plans, exploit opportunities, and act intelligently to resolve local unforeseen contingencies. Our idea consists of two main intuitions:
\begin{itemize}
    \item To avoid combinatorial explosion while planning and acting with active inference for long-term tasks we specify the nominal behavior of an agent through a BT, used as a prior. In doing so, BTs provide \textit{global reactivity to foreseen situations}
    \item To avoid coding every possible contingency in the BT, we program only desired states offline, and we leave action selection to the online active inference scheme. Active inference provides then \textit{local reactivity to unforeseen situations}.
\end{itemize}

To achieve such a hybrid integration, and to be able to deploy this architecture on real robotic platforms, we addressed the following three fundamental problems: 1) how to define the generative models for active inference in robotics, 2) how to use BTs to provide priors as desired states to active inference, 3) how to handle action preconditions in active inference, and possible conflicts which might arise at run-time in a dynamic environment.

\subsection{Definition of the models for active inference}
\label{sec:modela_idea}
The world in which a robot operates needs to be abstracted such that the active inference agent can perform its reasoning processes. In this work, we operate in a continuous environment with the ability of sensing and acting through symbolic decision making. {In the general case, the decision making problem will include multiple sets of states, observations, and actions. Each independent set of states is a factor, for a total of $n_f$ factors. For a generic factor $f_j$ where $j\in\mathcal{J} = \{1,...,n_f\}$, the corresponding state factor is: 
\begin{gather}
    \nonumber
    s^{(f_j)} = \left[s^{(f_j,1)}, s^{(f_j,2)},...,s^{(f_j,m^{(f_j)})}\right]^\top,\\ 
    \mathcal{S} = \big\{ s^{(f_j)}|j\in\mathcal{J}\big\}
\end{gather}
where $m^{(f_j)}$ is the number of mutually exclusive symbolic values that a state factor can have. Each entry of $s^{(f_j)}$ is a real value between 0 and 1, and the sum of the entries is 1. {This represents the current belief state.}} 
Then, we define $x\in \mathcal{X}$ the continuous states of the world and the internal states of the robot are accessible through the symbolic perception system. The role of this perception system is to 
{compute the symbolic observations based on the continuous state $x$}, such that they can be manipulated by the discrete active inference agent. Observations $o$ are used to build a probabilistic belief about the current state. {Assuming one set of observations per state factor with $r^{(f_j)}$ possible values, it holds:
\begin{gather}
    \nonumber
    o^{(f_j)} = \left[o^{(f_j,1)}, o^{(f_j,2)},...,o^{(f_j,r^{(f_j)})}\right]^\top,\\ 
    \mathcal{O} = \big\{ o^{(f_j)}|j\in\mathcal{J}\big\}
\end{gather}
Additionally, the robot has a set of symbolic skills to modify the corresponding state factor:
\begin{gather}
    \nonumber
    a_\tau \in \alpha^{(f_j)} = \big\{a^{(f_j,1)}, a^{(f_j,2)},...,a^{(f_j,k^{(f_j)})}\big\},\\ 
    \mathcal{A} = \big\{ \alpha^{(f_j)}|j\in\mathcal{J}\big\}
\end{gather}
where $k^{(f_j)}$ is the number of actions that can affect a specific state factor $f_j$. Each generic action $a^{(f_j,\cdot)}$ has associated a symbolic name, \textit{parameters}, \textit{pre-} and \textit{postconditions}}:
\begin{table}[ht]
\centering % used for centering table
\begin{tabular}{l c c} % centered columns (4 columns)
    \textbf{Action} $a^{(f_j,\cdot)}$& \textbf{Preconditions}& \textbf{Postconditions}\\  % 
    \texttt{action\_name(}$par$\texttt{)}&\texttt{prec}$_{a^{(f_j,\cdot)}}$ & \texttt{post}$_{a^{(f_j,\cdot)}}$
\end{tabular}
\end{table}

where \texttt{prec}$_{a^{(f_j,\cdot)}}$ and \texttt{post}$_{a^{(f_j,\cdot)}}$ are \textit{first-order logic predicates} that can be evaluated at run-time. A logical predicate is a boolean-valued function $\mathcal{P}:\mathcal{X}\rightarrow\{$\texttt{true}, \texttt{false}$\}$. 

Finally, {we define the logical state $l^{(f_j)}$ as a one-hot encoding of $s^{(f_j)}$. We indicate as $\mathcal{L}_c(\tau) = \big\{ l^{(f_j)}|j\in\mathcal{J}\big\}$ the (time varying) \textit{current} logical state of the world}. Defining a logic state based on the probabilistic belief $s$ built with active inference, instead of directly using the observation of the states $o$, increases robustness against noisy sensor readings, as we will explain in Example~\ref{ex:noise}. 
Given the model of the world just introduced, we can now define {for each factor $f_j$} the likelihood matrix $\bm A^{(f_j)}$, {the $k^{(f_j)}$ transition matrices $\bm B_{a^{(f_j,\cdot)}_\tau}$ and the prior preferences $\bm C^{(f_j)}$}. When an observation is available, $\bm A^{(f_j)}$ provides information about the corresponding value of a state factor $s^{(f_j)}$. For a particular state, the probability of a state observation pair { $o^{(f_j)}_\tau$, $s^{(f_j)}_\tau$ is given by $\bm A^{(f_j)} \in \mathbb{R}^{r^{(f_j)} \times m^{(f_j)}}$. In case a state factor is observable with full certainty, each state maps into the corresponding observation thus the likelihood matrix is the identity of size ${m^{(f_j)}}$, $I_{m^{(f_j)}}$}.
Note that knowing the mapping between observations and states does not necessarily mean that we can observe all the states at all times. Observations can be present or not, and when they are the likelihood matrix indicates the relation between that observation and the state. This relation can be more complex and incorporate uncertainty in the mapping as well. 
To define the transition matrices, we need to encode in a matrix form the effects of each action on the relevant state factors. The probability of a ending up in a state {$s^{(f_j)}_{\tau+1}$, given $s^{(f_j)}_{\tau}$ and action $a^{(f_j,\cdot)}_\tau$ is given by:
\begin{gather}
       \nonumber
       P(s^{(f_j)}_{\tau+1}|s^{(f_j)}_{\tau},a^{(f_j,\cdot)}_\tau) = Cat(\bm B_{a^{(f_j,\cdot)}_\tau}s^{(f_j)}_{\tau}),\\ 
       \bm B_{a^{(f_j,\cdot)}_\tau} \in \mathbb{R}^{m^{(f_j)} \times m^{(f_j)}}
\end{gather}
In other words, we define $\bm B_{a^{(f_j,\cdot)}_\tau}$ as a square matrix encoding the post-conditions of action $a^{(f_j,\cdot)}_\tau$. The prior preferences over observations (or states) need to be encoded, {for each factor,} in $\bm C^{(f_j)} \in \mathbb{R}^{m^{(f_j)}}$, with $\mathcal{C} = \big\{\bm C^{(f_j)}|j\in\mathcal{J}\big\}$.} The higher the value of a preference, the more preferred a particular state is, and vice-versa. Priors are formed according to specific desires and they will be used to interface active inference and BTs. Finally, one has also to define the vector encoding the initial belief about the probability distribution of the states, that is {$\bm D^{(f_j)} \in \mathbb{R}^{m^{(f_j)}}$. This vector is normalized, and when no prior information is available, each entry will be $1/m^{(f_j)}$.} In this work, we assume the model parameters such as likelihood and transition matrices to be known. However, one could use the free-energy minimization to learn them \cite{friston2016AILearning}.

\begin{example}
\label{ex:obs_states}
\textit{Consider a mobile manipulator in a retail environment. {We want the robot to be able to decide when to navigate to a certain goal location. To achieve so we need one state factor ${s^{(loc)}}$, one observation ${o^{(loc)}}$, and one symbolic action $a^{(loc,1)} = $ \texttt{moveTo(goal)}. The robot can also decide not to do anything, so  $a^{(loc,2)} = $ \texttt{idle}}.}
\begin{table}[h]
\centering % used for centering table
\begin{tabular}{l l l} % centered columns (4 columns)
    \textbf{Actions}& \textbf{Preconditions}& \textbf{Postconditions}\\  % 
    \texttt{moveTo(goal)} & \texttt{-} & ${l^{(loc)}} = [1\ 0]^\top$ \\
    \texttt{idle} & \texttt{-} & \texttt{-} \\ 
\end{tabular}
\end{table}

\textit{The current position in space of the robot {is a continuous value $ x\in  \mathcal{X}$}. However, during execution the robot is given an observation ${o^{(loc)}}$ which indicates simply if the} \texttt{goal} \textit{has been reached or not. In this case $\bm A^{(loc)} = I_2$. The agent is then constantly building a probabilistic belief ${s^{(loc)}}$ encoding the chances of being at the} \texttt{goal}. \textit{In case the robot has not yet reached the goal, a possible configuration at time $\tau$ is the following:}
\begin{equation}
    {o^{(loc)}} = \begin{bmatrix} \texttt{isAt(goal)}\\ \texttt{!isAt(goal)}
    \end{bmatrix} = \begin{bmatrix} 0 \\ 1
    \end{bmatrix}{,}\ {s^{(loc)}} = \begin{bmatrix} 0.08 \\ 0.92
    \end{bmatrix}{,}
\end{equation}
\begin{equation}
\nonumber
   \ {l^{(loc)}} = \begin{bmatrix} 0 \\ 1
    \end{bmatrix}{,}\ {\bm B_{moveTo}} = \begin{bmatrix}
        0.95 & 0.9 \\
        0.05 & 0.1
    \end{bmatrix}{,}\ \bm B_{idle} = \begin{bmatrix}
        1 & 0 \\
        0 & 1
    \end{bmatrix}
\end{equation}
{The preference over a state to be reached is given through $\bm C^{(loc)}$. A robot wanting to reach a location will have, for instance, a preference $\bm C^{(loc)} = [1, 0]^\top$.}
\vspace{0mm}
\end{example}
The transition matrix {$\bm B_{moveTo}$} encodes the probability of reaching a goal location through the action \texttt{moveTo(goal)}, which might fail with a certain probability. We also encode an \texttt{Idle} action, which does not modify any state, but it provides information on the outcome of the action selection process as we will see in the next subsections. In this simple case, the world state is just a single state factor ${s^{(loc)}}$.  On the other hand, in later more complicated examples, the world state will contain all the different aspects of the world, for which a probabilistic representation of their value is built and updated.

Using the proposed problem formulation for the active inference models, we can abstract redundant information which is not necessary to make high-level decisions. For instance, in the example above, we are not interested in building a probabilistic belief of the current robot position. To decide if to use the action \texttt{moveTo(goal)} or not, it is sufficient to encode if the \texttt{goal} has been reached or not. 

\subsection{BTs integration: planning preferences, not actions}
\label{sec:subsecPlanOffline}
To achieve a meaningful behavior in the environment through active inference, we need to encode specific desires into the agent's brain through {$\mathcal{C} = \big\{\bm C^{(f_j)}|j\in\mathcal{J}\big\}$.}

\paragraph*{A prior as BT}
We propose to extend the available BT nodes in order to be able to specify desired states to be achieved as leaf nodes. We introduce a new type of leaf node called \textit{prior nodes}, indicated with a green hexagon. These nodes can be seen as standard action nodes but instead of commanding an action, they simply set the desired value of a state in {$\mathcal{C}$} and they run active inference for action selection. The prior node is then just a leaf node in the BT which returns: \texttt{Success} if a state is achieved, \texttt{Running} while trying to achieve it, or \texttt{Failure} if for some reason it is not possible to reach the desired state. The return statuses are according to the outcome of our reactive action selection process as explained in Section \ref{sec:completeControlSheme}.

\paragraph*{Sub-goals through BTs}
To reach a distal goal state, we plan achievable sub-goals in the form of desired logical states $l$, according to the available actions that a robot possesses. This idea of using sub-goals was already used in \cite{Kaplan2018}, but in our solution with BTs, we provide a task-independent way to define sub-goals which is only based on the set of available skills of the robot, such that we can make sure that it can complete the task. At planning time, we define the ideal sequence of states and actions to complete a task such that subsequent sub-goals (or logical desired states) are achievable by means of one action.  This can be done manually or through automated planning. At run-time, however, we only provide the sequence of states to the algorithm, as in Fig.~\ref{fig:planning}.
\begin{figure}[!htb]
    \centering
    \includegraphics[width=0.42\textwidth]{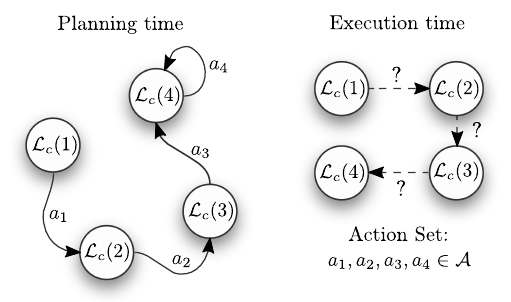}
    \caption{The path among states is planned offline using the available set of actions but only the sequence of states is provided at run-time. Actions are chosen online from the available set with active inference.}
    \label{fig:planning}
\end{figure}

\begin{example}
\label{ex:simpleBT}
\textit{To program the behavior of the robot in Example~\ref{ex:obs_states} to visit a certain goal location, the BT will set the prior over ${s^{(loc)}}$ to ${\bm C^{(loc)}} = [1, 0]^\top$ meaning that the robot would like to sense to be at} \texttt{goal}.
\end{example}
A classical BT and a BT for active inference with prior nodes are reported in Fig.~\ref{fig:btSimple}. Note that the action is left out in the BT for active inference because these are selected at runtime. In this particular case, the condition \texttt{isAt(goal)} can be seen as the desired observation to obtain. 
\begin{figure}[!htb]
    \centering
    \includegraphics[width=0.4\textwidth]{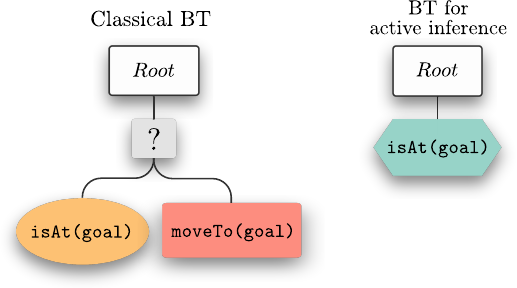}
    \caption{BT to navigate to a location using a classical BT and a BT for active inference. One action \texttt{moveTo(goal)} is available and one condition \texttt{isAt(goal)} provides information if the current location is at the goal. The prior node for active inference (green hexagon) sets the desired prior and runs the action selection process.} 
    \label{fig:btSimple}
\end{figure}

Note that the amount of knowledge (i.e. number of states and actions) that is necessary to code a classical BT or our active inference version in Example \ref{ex:simpleBT} is the same. However, we abstract the fallback by planning in the state space and not in the action space. Instead of programming the action \texttt{moveTo(goal)} we only set a prior preference over the state \texttt{isAt(goal)} since the important information is retained in the state to be achieved rather than in the sequence of actions to do so. Action selection through active inference will then select the appropriate skills to match the current state of the world with the desired one, minimizing this discrepancy through free-energy minimization.
\begin{example}
\textit{Consider the scenario in Example~\ref{ex:obs_states} with the prior as in Example~\ref{ex:simpleBT}. The prior is specifying a preference over being at the desired goal location, but the mobile manipulator is not. The plan generated at runtime {with Alg.~\ref{alg:AIC}} (see Examples \ref{ex:Gexploit} and \ref{ex:planSel}) would be to perform the action} \texttt{moveTo(goal)}, \textit{since this increases the probability of getting an observation} \texttt{isAt(goal)}.
\end{example}
As we will thoroughly explain in Sec.~\ref{sec:Analysis_DoA}, our algorithm creates online a stable region of attraction  from the current state to the goal state instead of planning it offline through fallbacks. See Example~\ref{ex:doa} for a concrete case.
\begin{figure*}
    \centering
    \includegraphics[width=0.85\textwidth]{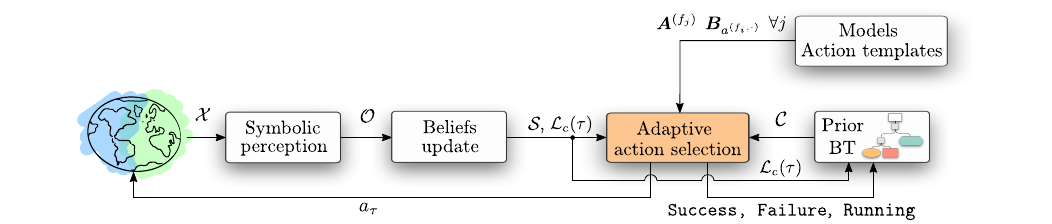}
    \caption{Overview of the control architecture for reactive action planning and execution using active inference. Adaptive action selection is performed according to Algorithm \ref{alg:idea}. {The symbolic perception module {computes the symbolic observations based on the continuous state}. In this work we assume this mapping known and encode it through simple rules based on measurements from the robot's sensors. However learning methods to define this relationship from data could be employed as well.}}
    \label{fig:scheme}
\end{figure*}

\subsection{Action preconditions and conflicts}
Past work on active inference, such as \cite{Kaplan2018}, was based on the assumption that actions were always executable and non-conflicting, but these do not hold in more realistic scenarios. 

\paragraph*{Action preconditions in active inference} We propose to encode action preconditions as desired logical states that need to hold to be able to execute a particular action. This is illustrated in the next example. 

\begin{example}
\label{ex:movePick}
\textit{We add one more action to the set of skills of our mobile manipulator:} \texttt{pick(obj)} \textit{and the relative transition matrix {$\bm B_{pick}$}. The action templates are extended as follows:}
\begin{table}[h]
\centering % used for centering table
\begin{tabular}{l l l} % centered columns (4 columns)
    \textbf{Actions}& \textbf{Preconditions}& \textbf{Postconditions}\\  % 
    \texttt{moveTo(goal)} & \texttt{-} & ${l^{(loc)}} = [1\ 0]^\top$ \\
    & & ${l^{(reach)}} = [1\ 0]^\top$ \\
    \texttt{pick(obj)} & \texttt{isReachable(obj)}& ${l^{(hold)}} = [1\ 0]^\top$\\
\end{tabular}
\end{table}
\begin{align}
\nonumber
     & {o^{(hold)}} = \begin{bmatrix} \texttt{isHolding(obj)}\\ \texttt{!isHolding(obj)} \end{bmatrix}{,}\hspace{5mm}
    {B_{pick}} = \begin{bmatrix} 0.95 & 0.9 \\ 0.05 & 0.1
    \end{bmatrix} & \\
    & {o^{(reach)}} = \begin{bmatrix} \texttt{isReachable(obj)}\\ \texttt{!isReachable(obj)} \end{bmatrix} & 
\end{align}
\vspace{0mm}
\end{example}
where we added a new logical state ${l^{(hold)}}$, the relative belief ${s^{(hold)}}$ and observation ${o^{(hold)}}$, which indicates if the robot is holding the object \texttt{obj}. In the simplest case, we suppose that the only precondition for successful grasping is that \texttt{obj} is reachable. We then add a logical state ${l^{(reach)}}$, as well as ${s^{(reach)}}$ and ${o^{(reach)}}$, to provide active inference with information about this precondition. ${o^{(reach)}}$ can be built for instance trying to compute a grasping pose for a given object. The robot can act on the state ${l^{(hold)}}$ through \texttt{pick}, and it can act on ${l^{(reach)}}$ through \texttt{moveTo}.

\paragraph*{Conflicts resolution in active inference}
Conflict resolution due to dynamic changes in the environment and online decision making is handled through modification of the prior preferences in ${\mathcal{C}}$. The BT designed offline specifies at runtime the desired state to be achieved. This is done by populating the prior preference over a state with the value of one. Note that if there is no goal, the preferences over states are set to zero everywhere so there is no incentive to act to achieve a different state. Given some preferences over states, the online decision making algorithm with active inference selects an action, and checks if the preconditions are holding according to the current belief state. If so, the action is executed, if not, the missing preconditions are added to the current preferred state with a higher preference (i.e. $>1$), in our case with value 2. This can lead to conflicts with the original BT \cite{Colledanchise2019}, that is the robot might want to simultaneously achieve two conflicting states. However, the state relative to a missing precondition has higher priority (i.e. $>1$) by construction. As explained in Algorithm~\ref{alg:idea}, if preconditions are missing at runtime, action selection is performed again with the updated prior ${\mathcal{C}}$, such that actions that will satisfy them are more likely. With our method, there is no need to explicitly detect a conflict and re-ordering a BT then, as was done in past work on the dynamic expansion of BTs \cite{Colledanchise2019}. In fact, since the decision making with active inference happens continuously during task execution, once a missing precondition is met this is removed from the current desired state. Thus, the only remaining preference is the one imposed by the BT, which can now be resumed. This leads to a natural conflict resolution and plan resuming without ad-hoc recovery mechanisms. The advantage of active inference is that we can represent \textit{which} state is important but also \textit{when} with different values of preference. Since missing preconditions are added to the current prior with a higher preference with respect to the offline plan, this will induce a behavior that can initially go against the initial BT because the new desire is more appealing to be satisfied. Conflict resolution is then achieved by \textit{locally updating} prior desires about a state, giving them higher preference. The convergence analysis of this approach is reported in Section~\ref{sec:Analysis}, and a concrete example of conflict resolution in a robotic scenario is presented in Sec.~\ref{sec:runtimeconflicts}, Example~\ref{ex:conflict}.
\subsection{Complete control scheme}
\label{sec:completeControlSheme}
Our solution is summarised in Algorithm~\ref{alg:idea} and Fig.~\ref{fig:scheme}. {Every time a BT is ticked, given a certain frequency, Algorithm~\ref{alg:idea} is run}. The symbolic perception layer takes the sensory readings and translates these continuous quantities into logical observations. This can be achieved through user-defined models according to the specific environment and sensors available. The logical observations are used to perform belief updating to keep a probabilistic representation of the world in $\mathcal{S}$. Then, the logical state $\mathcal{L}_c(\tau)$ is formed. Every time a \textit{prior} node in the BT is ticked, the corresponding priors in ${\mathcal{C}}$ are set. 

For both missing preconditions and conflicts, high priority priors are removed from the preferences ${\mathcal{C}}$ whenever the preconditions are satisfied or the conflicts resolved (lines \texttt{6-10}), allowing to resume the nominal flow of the BT. Active inference from Algorithm~\ref{alg:AIC} is then run for action selection. If no action is required since the logical state corresponds to the prior, the algorithm returns \texttt{Success}, otherwise, the selected action's preconditions are checked and eventually pushed with higher priority. Then, action selection is performed with the updated prior. This procedure is repeated until either an executable action is found, returning \texttt{Running}, or no action can be executed, returning \texttt{Failure}. The case of \texttt{Failure} is handled through the global reactivity provided by the BT. This creates dynamic and stable regions of attraction as explained in Section~\ref{sec:Analysis_DoA}, by means of sequential controller composition \cite{Burridge1999} (lines \texttt{17-31} of Algorithm~\ref{alg:idea}). 

Crucially, in this work, we propose the new idea of using \textit{dynamic priors}. For a factor {$f_j$, $\bm C^{(f_j)}$} is not fixed a priori as in past active inference works, but instead, {it can change over time according to the BT for a task. This allows preconditions checking and conflict resolution within active inference.} A robot can follow a long programmed routine while autonomously taking decisions to locally compensate for unexpected events.  This reduces considerably the need for hard-coded fallbacks, allowing to compress the BT to a minimal number of nodes. 

\begin{algorithm}
\caption{Pseudo-code for Adaptive Action Selection} \label{alg:idea}
\begin{algorithmic}[1]
\State \textit{Get desired prior and parameters from BT:}
    \State ${\mathcal{C}}$, \texttt{param} $\leftarrow$ BT {\Comment{With priority 1}}
\State \textit{Set current observations, beliefs and logical state:}
    \State \texttt{Set} $\mathcal{O},\ \mathcal{S},\ \mathcal{L}_c(\tau)$
    \State \textit{Remove preferences with high-priority {(i.e. $> 1$)} if satisfied:}
\For{{all priors $\bm C^{(f_j)}$ with preference $\geq 1$}} {
    \If{{$l_c^{(f_j)}$ holds}}
        \State {\texttt{Remove pushed preference for} $l_c^{(f_j)}$};
    \EndIf}
\EndFor
\State \textit{Run active inference given} $\mathcal{O},\ \mathcal{S}$ \textit{and} ${\mathcal{C}}$:
    \State $a_\tau$ $\leftarrow$ \texttt{Action\_selection($\mathcal{O},\mathcal{S},{\mathcal{C}}$)}\Comment{Alg. \ref{alg:AIC}}
    \State {\texttt{Update} $\mathcal{L}_c(\tau)$}
    \If{$a_\tau$ == \texttt{Idle}}
        \State \textbf{return} \texttt{Success}; \Comment{No action required}
   \Else
   \State \textit{Check action preconditions:} 
    \While{$a_\tau$ \texttt{!=Idle}}
            \If{\texttt{prec}$_{a_\tau} \in \mathcal{L}_c(\tau)$ {OR \texttt{prec}$_{a_\tau} = \emptyset$}}
               \State \texttt{Execute($a_\tau$)};
               \State \textbf{break, return} \texttt{Running}; \Comment{Executing $a_\tau$}
            \Else
            \State \textit{Push missing preconditions in} ${\mathcal{C}}$:
                \State ${\mathcal{C}} \leftarrow$ \texttt{prec}$_{a_\tau}$; \Comment{With priority 2}
                \State \textit{Exclude $a_\tau$ and re-run Alg. \ref{alg:AIC}}:
                \State \texttt{Remove($a_\tau$)};
                \State $a_\tau$ $\leftarrow$ \texttt{Action\_selection($\mathcal{O},\mathcal{S},{\mathcal{C}}$)}
                \If{\texttt{$a_\tau==$ Idle }}
                    \State \textbf{return} \texttt{Failure}; \Comment{No solution}
                \EndIf
            \EndIf
     \EndWhile
   \EndIf
\end{algorithmic}
\end{algorithm}

\section{Theoretical analysis}
\label{sec:Analysis}
\subsection{Analysis of convergence}
\label{sec:Analysis_DoA}
We provide a theoretical analysis of the proposed control architecture. There are two possible scenarios that might occur at run-time. Specifically, the environment might or might not differ from what has been planned offline through BTs. These two cases are analyzed in the following to study the convergence to the desired goal of our proposed solution.

\subsubsection{The dynamic environment IS as planned}
In a nominal execution, where the environment in which a robot is operating is the same as the one at planning time, there is a one-to-one equivalence between our approach and a classical BT formulation. This follows directly by the fact that the BT is defined according to Section \ref{sec:subsecPlanOffline}, so each subsequent state is achievable by means of one single action. At any point of the task, the robot finds itself in the planned state and has only one preference over the next state given by the BT through $\mathcal{C}$. The only action which can minimize the expected free-energy is the one used during offline planning. In a nominal case, then, we maintain all the properties of BTs, which are well explained in \cite{Colledanchise2017}. In particular, the behavior will be Finite-Time Successful (FTS) \cite{Colledanchise2017} if the atomic actions are assumed to return success after a finite time. Note that so far we did not consider actions with the same postconditions. However, in this case, Algorithm~\ref{alg:idea} would sequentially try all the alternatives following the given order at design time. This can be improved for instance by making use of semantic knowledge at runtime to inform the action selection process about preferences over actions to achieve the same outcome. This information can be stored for instance in a knowledge base and can be used to parametrize the generative model for active inference.

\subsubsection{The dynamic environment IS NOT as planned}
The most interesting case is when a subsequent desired state is not reachable as initially planned. As explained before, in such a case we push the missing preconditions of the selected action into the current prior $\mathcal{C}$ to locally and temporarily modify the goal. We analyze this idea in terms of sequential controllers (or actions) composition as in \cite{Burridge1999}, and we show how Algorithm~\ref{alg:idea} generates a plan that will eventually converge to the initial goal. First of all, we provide some assumptions and definitions that will be useful for the analysis.
\paragraph*{Assumption 1} The action templates with pre- and postconditions provided to the agent are correct;
\paragraph*{Assumption 2} A given desired goal is achievable by at least one atomic action;
\paragraph*{Definition 1} The domain of attraction of an action $a_i$ is defined as the set of its preconditions. This domain for $a_i$ is indicated as $\mathcal{D}(a_i)$;
\paragraph*{Definition 2} We say that an action $a_1$ \textit{prepares} action $a_2$ if the postconditions $\mathcal{P}_c$ of $a_2$ lie within the domain of attraction of $a_1$, so $\mathcal{P}_c(a_2)\subseteq \mathcal{D}(a_1)$;
\paragraph*{Note}{For the derivations in this section we consider, without lack of generality, one single factor such that we can drop the superscripts, i.e. $\bm C^{(f_j)} = \bm C$}.

Following Algorithm~\ref{alg:idea} each time a prior leaf node is ticked in the BT, active inference is used to define a sequence of actions to bring the current state towards the goal. It is sufficient to show, then, that the asymptotically stable equilibrium of a generic generated sequence is the initial given goal. 

\begin{lemma}
\label{lemma:2}
Let $\mathcal{L}_c(\tau)$ be the current logic state of the world, and $\mathcal{A}$ the set of available actions. An action $a_i\in\mathcal{A}$ can only be executed within its domain of attraction, so when {$\mathcal{L}_c(\tau) \in \mathcal{D}(a_i)$}. Let us assume that the goal encoded in $\bm C$ is a postcondition of an action $a_1$ such that $\mathcal{P}_c(a_1)= \bm C$, and that $\mathcal{L}_c(\tau)\ne \bm C$. If {$\mathcal{L}_c(\tau) \not\in \mathcal{D}(a_1)$}, Algorithm~\ref{alg:idea} generates a plan $\pi=\{a_1,\dots,a_N\}$ with domain of attraction $\mathcal{D}(\pi)$ according to the steps below.
\begin{enumerate}
    \item Let the initial sequence contain $a_1\in\mathcal{A}$, $\pi(1)=\{a_1\}$, $\mathcal{D}(\pi)=\mathcal{D}(a_1)$, set $N = 1$
    \item Remove $a_N$ from the available actions, and add the unmet preconditions {$\mathcal{D}(a_N)$ to the prior $\bm C$ with higher priority, such that} $\bm C =
    \bm C\cup\mathcal{D}(a_N)$
    \item Select $a_{N+1}$ through active inference (Algorithm~\ref{alg:AIC}). Then,  $a_{N+1}$ prepares  $a_{N}$ by construction, $\pi(N+1)=\pi(N)\cup \{a_N\}$, $\mathcal{D}_{N+1}(\pi)=\mathcal{D}_{N}(\pi)\cup\mathcal{D}(a_{N+1})$, and $N = N+1$
    \item Repeat 2, 3 until {$\mathcal{L}_c(\tau) \in \mathcal{D}(a_N)$} \texttt{OR $a_N==$ Idle}  
\end{enumerate}
{If $\mathcal{L}_c(\tau) \in \mathcal{D}(a_N)$, the sequential composition $\pi$ with region of attraction $\mathcal{D}(\pi) = \bigcup_{a_i} \mathcal{D}(a_i)$ for $i = 1, ..., N$} stabilizes the system at the given desired state $\bm C$. If $a_i\in\mathcal{A}$ are FTS, then $\pi$ is FTS. 
\end{lemma}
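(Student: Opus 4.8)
\emph{Proof plan.} I would read the four-step construction in the statement as producing a \emph{sequential composition of controllers} in the sense of \cite{Burridge1999}, so that the proof reduces to checking that the hypotheses supply the two ingredients that result needs: a chain of ``prepares'' relations terminating at the goal action, and a covering of the operative state region by the actions' domains of attraction. First I would verify that the sequence is well defined and finite. Each pass through item~2 removes the current frontier action $a_N$ from the available set before item~3 re-queries active inference, so after at most $|\mathcal{A}|$ passes either the test $\mathcal{D}(a_N)\subseteq\mathcal{L}_c$ fires or Algorithm~\ref{alg:AIC} returns \texttt{Idle}. Since the conclusion is conditioned on the first alternative, under that hypothesis $\pi=\{a_1,\dots,a_N\}$ is a finite list whose last action is immediately executable in $\mathcal{L}_c$.

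Next I would establish the ``prepares'' chain by induction on the frontier index $i=1,\dots,N-1$. At index $i$, item~2 pushes the unmet preconditions $\mathcal{D}(a_i)\setminus\mathcal{L}_c$ into $\bm C$ with high preference; by Assumption~2 these states are postconditions of at least one atomic action, and by Assumption~1 the templates are correct, so the expected-free-energy minimiser returned by Algorithm~\ref{alg:AIC} in item~3 is an action $a_{i+1}$ whose postconditions cover the pushed states. Combined with the components of $\mathcal{L}_c$ that $a_{i+1}$ leaves untouched, this yields $\mathcal{D}(a_i)\subseteq\mathcal{P}_c(a_{i+1})\cup\mathcal{L}_c$, i.e. running $a_{i+1}$ lands the system inside $\mathcal{D}(a_i)$ — precisely the relation of Definition~2. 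Because each $a_{i+1}$ is drawn from the shrinking available set, ``prepares'' induces a strict order on $\pi$, so the composition cannot cycle.

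Then I would assemble the funnel and invoke the composition theorem. We have: the system starts inside $\mathcal{D}(a_N)$ since $\mathcal{D}(a_N)\subseteq\mathcal{L}_c$; the chain $a_N$ prepares $a_{N-1}$ prepares $\cdots$ prepares $a_1$ from the induction; and the hypothesis $\mathcal{D}(a_1)\subseteq\bigcup\mathcal{P}_c(\pi\backslash a_1)$, which says the preconditions of the goal action are jointly re-established by the rest of the chain, so that $a_1$ is eventually enabled and drives the state to $\mathcal{P}_c(a_1)=\bm C$. These are exactly the hypotheses of the sequential-composition result of \cite{Burridge1999}, with $\mathcal{D}(\pi)=\bigcup_i\mathcal{D}(a_i)$ as the domain and $\bm C$ as the goal: the switching policy that always executes the available action highest in the order has $\bm C$ as its asymptotically stable equilibrium restricted to $\mathcal{D}(\pi)$, so $\pi$ stabilises the system at $\bm C$. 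For the last claim, if every $a_i$ is FTS then each of the $N$ segments of the switching policy terminates in finite time, and since the strict order forbids revisiting an action the policy runs through the $N$ segments exactly once; a finite sum of finite times is finite, so $\pi$ is FTS.

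\emph{Main obstacle.} The delicate point is the induction above: showing that the active-inference selection of Algorithm~\ref{alg:AIC}, fed the high-preference pushed prior, actually returns an action that \emph{prepares} the current frontier action rather than some unrelated minimiser of $G$. This needs the structure of $G(\pi,\tau)$ in \eqref{eq:expectedF} together with the fact that pushed preferences have magnitude $>1$ and therefore dominate the nominal BT preferences, plus a short argument that an action leaving the pushed state unchanged cannot decrease $G$ as much as one whose postcondition matches it. A secondary subtlety is that only the \emph{currently} unmet preconditions are pushed at each step, so a later action could in principle undo a precondition of $a_1$ secured earlier; the lemma forecloses this by assuming $\mathcal{D}(a_1)\subseteq\bigcup\mathcal{P}_c(\pi\backslash a_1)$ directly, and I would simply remark that this is exactly the no-interference condition the composition argument requires.
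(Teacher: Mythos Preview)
Your proposal is correct and follows essentially the same line as the paper: both argue via the sequential-composition result of \cite{Burridge1999}, using the ``prepares'' chain $a_N\to a_{N-1}\to\cdots\to a_1$ together with the executability of $a_N$ and the coverage hypothesis $\mathcal{D}(a_1)\subseteq\bigcup\mathcal{P}_c(\pi\backslash a_1)$ to conclude convergence to $\bm C$, and then finiteness of the chain for the FTS claim. The paper's own proof is a two-sentence sketch that simply asserts $\mathcal{P}_c(a_N)\subseteq\mathcal{D}(a_{N-1})$ and invokes the hypotheses; your version fills in the termination argument, the induction for the prepares relation, and the FTS bookkeeping, and you correctly flag as the delicate point exactly what the paper sweeps under ``by construction'' in item~3 of the statement.
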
 

\begin{proof}
Since {$\mathcal{L}_c(\tau)\in\mathcal{D}(a_N)$} and $\mathcal{P}_c(a_N)\subseteq\mathcal{D}(a_{N-1})$, it follows that $\mathcal{L}_c(\tau)$ is moving towards $\bm C$. {Moreover, by construction $\mathcal{D}(a_1) \subseteq \bigcup_{a_i}\mathcal{P}_c(a_i)$ for $i = 2, ..., N$. After completing action $a_1$, it results} $\mathcal{L}_c(\tau)\equiv \bm C$ since by definition $\mathcal{P}_c(a_1)= \bm C$.
\end{proof}

Note that if {$\mathcal{L}_c(\tau)\in\mathcal{D}(a_N)$} does not hold after sampling all available actions, it means that the algorithm is unable to find a set of actions that can satisfy the preconditions of the initially planned action. This situation is a major failure that needs to be handled by the overall BT. 
\textit{Lemma} \ref{lemma:2} is a direct consequence of the sequential behavior composition of FTS actions where each action has effects within the domain of attraction of the action below. The asymptotically stable equilibrium of each controller is either the goal $\bm C$, or it is within the region of attraction of another action earlier in the sequence, see \cite{Colledanchise2017,Burridge1999, Najafi2015}. One can visualize the idea of sequential composition in Fig.~\ref{fig:funnels}.

\begin{figure}[!htb]
    \centering
    \includegraphics[width=0.43\textwidth]{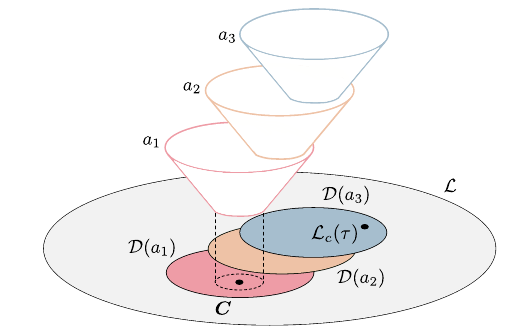}
    \caption{Schematic visualization of the domain of attraction $\mathcal{D}(\cdot)$ of different controllers around the current logical state $\mathcal{L}_c(\tau)$, as well as their postconditions within the domain of attraction of the controller below.}
    \label{fig:funnels}
\end{figure}

\subsection{Analysis of robustness}
It is not easy to find a common and objective definition of robustness to analyze the characteristics of algorithms for task execution. One possible way is to describe robustness in terms of domains or regions of attraction as in past work \cite{Burridge1999,Colledanchise2017}. When considering task planning and execution with classical BTs, often these regions of attraction are defined offline leading to a complex and extensive analysis of the possible contingencies that might eventually happen \cite{Colledanchise2017}, and these are specific to each different task. Alternatively, adapting the region of attraction requires either re-planning \cite{Paxton2019} or dynamic BT expansion \cite{Colledanchise2019}. Robustness can be measured according to the size of this region, such that a robot can achieve the desired goal from a plurality of initial conditions.
With Algorithm~\ref{alg:idea} we achieve robust behavior by \textit{dynamically} generating a suitable region of attraction according to the minimization of free-energy. This region {brings the current state towards the desired goal.} We then cover only the necessary region in order to be able to steer the current state to the desired goal, changing prior preferences at run-time. 

\begin{corollary}
\label{cor:robustness}
{When an executable action $a_N$ is found during task execution through Algorithm~2 such that $\pi = \{a_1,\dots,a_N\}$, the plan has a domain of attraction towards a given goal that includes the current state $\mathcal{L}_c(\tau)$. If $\mathcal{D}(a_1) \subseteq \bigcup_{a_i}\mathcal{P}_c(a_i)$ for $i = 2, ..., N$ the plan is asymptotically stable.}
\end{corollary}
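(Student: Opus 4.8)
The plan is to obtain Corollary~\ref{cor:robustness} directly as a specialization of Lemma~\ref{lemma:2}, identifying the ``region of attraction'' of the generated sequence $\pi=\{a_1,\dots,a_N\}$ with the union $\mathcal{D}(\pi)=\bigcup_{i=1}^N\mathcal{D}(a_i)$ of the domains of attraction of its constituent actions, in the sense of sequential controller composition~\cite{Burridge1999}. The first step is to establish membership of the current state: Algorithm~\ref{alg:idea} only returns \texttt{Running} with action $a_N$ once $a_N$ is executable, i.e. when its preconditions hold, $\mathcal{D}(a_N)\subseteq\mathcal{L}_c$ (equivalently, the loop of Lemma~\ref{lemma:2} terminated through the branch $\mathcal{D}(a_N)\subseteq\mathcal{L}_c$ rather than through $a_N=$ \texttt{Idle}). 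Hence $\mathcal{L}_c$ lies in $\mathcal{D}(a_N)\subseteq\mathcal{D}(\pi)$, so the region assembled on the fly by the algorithm contains $\mathcal{L}_c$; this part requires only unwinding the termination condition of Algorithm~\ref{alg:idea}.

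The second step is asymptotic stability under the extra hypothesis $\mathcal{D}(a_1)\subseteq\bigcup\mathcal{P}_c(\pi\backslash a_1)$. Here I would simply invoke Lemma~\ref{lemma:2}: by construction each $a_{i+1}$ prepares $a_i$, so $\mathcal{P}_c(a_{i+1})\subseteq\mathcal{D}(a_i)$, and the funnel chain of Fig.~\ref{fig:funnels} drives any state in $\mathcal{D}(\pi)$ monotonically ``up'' the sequence (each completed action lands in the domain of the next action earlier in the chain) until $a_1$ becomes executable; the added hypothesis guarantees that after the remaining actions complete, $\mathcal{D}(a_1)$ holds, so $a_1$ fires and its postcondition $\mathcal{P}_c(a_1)=\bm C$ is reached. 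Since no action in $\pi$ has an effect outside $\mathcal{D}(\pi)$ and $\bm C$ is the unique equilibrium of the composition, $\mathcal{D}(\pi)$ is an asymptotically stable region of attraction for $\bm C$ in the sense of~\cite{Burridge1999,Colledanchise2017}; if the atomic actions are FTS, the convergence is moreover finite-time, consistent with the conclusion of Lemma~\ref{lemma:2}.

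The main obstacle is not a deep calculation but a matter of rigor: pinning down precisely what ``asymptotically stable region of attraction'' means for the symbolic/logical transition dynamics here, and making the bridge to the continuous sequential-composition theory of~\cite{Burridge1999} airtight — in particular arguing that the logical state cannot ``leak'' out of $\bigcup_i\mathcal{D}(a_i)$ during execution and that progress along the chain is strict rather than merely non-decreasing. Because Lemma~\ref{lemma:2} already carries exactly this argument, the corollary reduces to quoting it, together with the observation that the success branch of Algorithm~\ref{alg:idea} (``an executable $a_N$ is found'') is precisely the hypothesis $\mathcal{D}(a_N)\subseteq\mathcal{L}_c$ used in the lemma.
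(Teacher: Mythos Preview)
Your proposal is correct and follows the same approach as the paper: the paper's own proof consists of a single line stating that the corollary follows directly from Lemma~\ref{lemma:2}. Your write-up simply unpacks this by making explicit why the termination branch of Algorithm~\ref{alg:idea} furnishes the hypothesis $\mathcal{D}(a_N)\subseteq\mathcal{L}_c$ needed to invoke the lemma, which is exactly the intended reduction.
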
 

\begin{proof}
The corollary follows simply from \textit{Lemma} \ref{lemma:2}.
\end{proof}

\begin{example} \textit{Let us assume that Algorithm~\ref{alg:idea} produced a plan $\pi=\{a_1,\ a_2\}$, a set of FTS actions, where $a_2$ is executable so {$\mathcal{L}_c(\tau)\in\mathcal{D}(a_2)$}, and its effects are such that $\mathcal{P}_c(a_2)\equiv \mathcal{D}(a_1)$. Since $a_2$ is FTS, after a certain running time {$\mathcal{L}_c(\tau) \in \mathcal{P}_c(a_2)$}. The next tick after the effects of $a_2$ took place, $\pi = \{a_1\}$ where this time $a_1$ is executable since {$\mathcal{L}_c(\tau)\in\mathcal{D}(a_1)$} and $\mathcal{P}_c(a_1)= \bm C$. The overall goal is then achieved in a finite time.}
\end{example}

Instead of looking for globally asymptotically stable plans from each initial state to each possible goal, which can be unfeasible or at least very hard \cite{Burridge1999}, we define smaller regions of attractions dynamically, according to the current state and goal.

\section{Experimental evaluation}
\label{sec:Experiments}
In this section, we evaluate our algorithm in terms of robustness, safety, and conflicts resolution in two validation scenarios with two different mobile manipulators and tasks. We also provide a theoretical comparison with classical and dynamically expanded BTs.

\subsection{Experimental scenarios}
\subsubsection{Scenario 1} The task is to pick one object from a crate and place it on top of a table. This object might or might not be reachable from the initial robot configuration, and the placing location might or might not be occupied by another movable box. Crucially, the state of the table cannot be observed until the table is reached. This results in a partially observable initial state at the start of the mission where the place location has a 50\% chance of being either free or occupied. Additionally, we suppose that other external events, or agents, can interfere with the execution of the task, resulting in either helping or adversarial behavior. 
The robot used for the first validation scenario is a mobile manipulator consisting of a Clearpath Boxer mobile base, in combination with a Franka Emika Panda arm. The experiment for this scenario was conducted in a Gazebo simulation in a simplified version of a real retail store, see Fig.~\ref{fig:gazebo}.
\begin{figure}[!htb]
    \centering
    \includegraphics[width=0.35\textwidth]{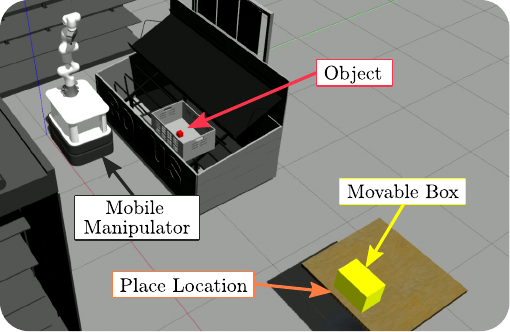}
    \caption{Simulation of the mobile manipulation task.}
    \label{fig:gazebo}
\end{figure}

\subsubsection{Scenario 2} The task is to fetch a product in a mockup retail store and stock it on a shelf using the real mobile manipulator TIAGo, as in Fig.~\ref{fig:tiago}.

Importantly, the BT for completing the task in the real store with TIAGo is the same one used for simulation with the Panda arm and the mobile base, just parametrized with a different object and place location. The code developed for the experiments and theoretical examples is publicly available.\footnote{\url{https://github.com/cpezzato/discrete_active_inference}}.
\begin{figure}[!htb]
    \centering
    \includegraphics[width=0.35\textwidth]{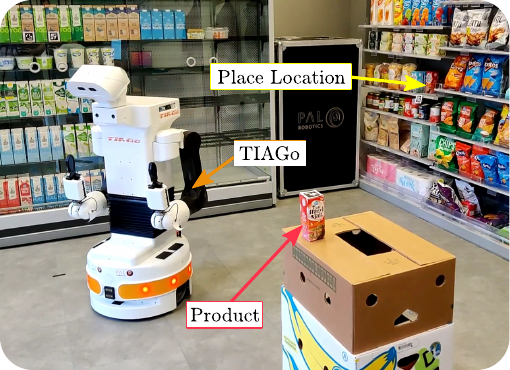}
    \caption{Experiments with TIAGo, stocking a product on the shelf.}
    \label{fig:tiago}
\end{figure}

\subsection{Implementation}

\subsubsection{Models for Scenarios 1 and 2}
In order to program the tasks for\textit{ Scenarios 1} and \textit{2}, we extended  the robot skills defined in our theoretical Example~\ref{ex:movePick}. We added then two extra states and their relative observations: \texttt{isPlacedAt(loc, obj)} called ${s^{(place)}}$, and \texttt{isLocationFree(loc)} called ${s^{(free)}}$. The state ${s^{(place)}}$ indicates whether or not \texttt{obj} is at \texttt{loc}, with associated probability, while ${s^{(free)}}$ indicates whether \texttt{loc} is occupied by another object. Then, we also had to add three more actions, which are 1) \texttt{place(obj, loc)}, 2) \texttt{push(obj)} to free a placing location, and 3) \texttt{placeOnPlate(obj)}, to place the object held by the gripper on the robot's plate. We summarise states and skills for the mobile manipulator in Table~\ref{tab:States}.

\begin{table}[ht!]
\caption{Notation for states and actions}% title of Table
\centering % used for centering table
\begin{tabular}{C{2.8cm} C{4.7cm}} % 
\hline\hline %inserts double horizontal lines
\textbf{State, Boolean State} & \textbf{Description}\\ [0.5ex] % 
\hline % inserts single horizontal line
${s^{(loc)}},\ {l^{(loc)}}$ & Belief about being at the goal location \\
${s^{(hold)}},\ {l^{(hold)}}$ & Belief about holding an object\\
${s^{(reach)}},\ {l^{(reach)}}$ & Belief about reachability of an object\\
${s^{(place)}},\ {l^{(place)}}$ & Belief about an object being placed at a location\\
${s^{(free)}},\ {l^{(free)}}$ & Belief about a location being free\\

\hline %inserts single line
\end{tabular}
\label{tab:States}
\end{table}

\begin{table}[h]
\centering % used for centering table
\begin{tabular}{l l l} % centered columns (4 columns)
    \textbf{Actions}& \textbf{Preconditions}& \textbf{Postconditions}\\ 
    \texttt{moveTo(goal)} & \texttt{-} & ${l^{(loc)}} = [1\ 0]^\top$ \\
    & & ${l^{(reach)}} = [1\ 0]^\top$ \\    
    \texttt{pick(obj)} & \texttt{isReachable(obj)}& ${l^{(hold)}} = [1\ 0]^\top$\\ 
     & \texttt{!isHolding} & \\
    \texttt{place(obj,loc)} & \texttt{isLocationFree(loc)}& ${l^{(place)}} = [1\ 0]^\top$\\
    \texttt{push()} & \texttt{!isHolding} & ${l^{(free)}} = [1\ 0]^\top$\\
    \texttt{placeOnPlate()} & - & ${l^{(hold)}} = [0\ 1]^\top$\\
\end{tabular}
\end{table}

The likelihood matrices are just the identity, while the transition matrices simply map the postconditions of actions, similarly to Example~\ref{ex:movePick}. Note that the design of actions and states is not unique, and other combinations are possible. One can make atomic actions increasingly more complex, or add more preconditions. The plan, specified in a BT, contains the desired sequence of states to complete the task, leaving out from the offline planning other complex fallbacks to cope with contingencies associated with the dynamic nature of the environment. The BT for performing the tasks in both \textit{Scenario~1} and \textit{Scenario~2} is reported in Fig.~\ref{fig:bt_experimets}. Note that the fallback for the action \texttt{moveTo} could be substituted by another prior node as in Fig.~\ref{fig:btSimple}, however, we opted for this alternative solution to highlight the hybrid combination of classical BTs and active inference. Design principles to choose when to use prior nodes and when normal fallbacks are reported in Sec.\ref{sec:comparison}.

\begin{figure}[!htb]
    \centering
    \includegraphics[width=0.42\textwidth]{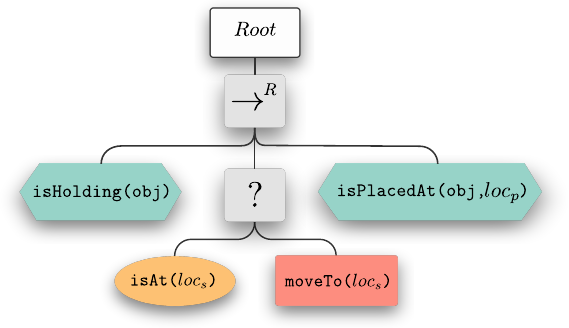}
    \caption{BT with prior nodes to complete the mobile manipulation task in the retail store, \textit{Scenario 1} and 2. $loc_s$, $loc_p$ are respectively the location in front of the shelf in the store and the desired place location of an item}
    \label{fig:bt_experimets}
\end{figure}

\subsubsection{Execution of Algorithm~\ref{alg:idea}}
{We provide a full execution of Algorithm~\ref{alg:idea} in Example~\ref{ex:alg_steps}. We consider \textit{Scenario~1,} for which the initial configuration of the robot is depicted in Fig.~\ref{fig:gazebo}, and the BT for the task is the one if Fig.~\ref{fig:bt_experimets}.}

{\begin{example}
\label{ex:alg_steps}
Let us consider Algorithm~\ref{alg:idea} and Scenario~1. At the start of the task, the first node \texttt{isHolding(obj)} in the BT is ticked, and the corresponding prior preference is set, $\bm C^{(hold)} = [1, 0]^\top $ (line 2 Alg.~\ref{alg:idea}). Since the robot is not holding the desired \texttt{obj} and it is not reachable, $o^{(hold)}=o^{(reach)}=[0, 1]^\top$. At the start, the initial states are a uniform distribution $s^{(hold)}=s^{(reach)}=[0.5, 0.5]^\top$ (line 4). Since the task just started, the only prior preference is the one set by the BT, so there are no high-priority priors (lines 6-10). Algorithm~\ref{alg:AIC} is then run (line 12), updating the states $s^{(hold)},\ s^{(reach)}$ according to the given observations, and selecting an action $a_\tau$. In this example, the updated most probable logical state (line 13) will be $l^{(hold)}_c=l^{(reach)}_c=[0, 1]^\top$ and $a_\tau$ will be \texttt{pick(obj)} since there is a mismatch between the $C^{(hold)}$ and $l^{(hold)}_c$. The preconditions of \texttt{pick(obj)} are checked (line 19). This action requires the object to be reachable, so this missing precondition is added to the preferences with high priority, that is $C^{(reach)}=[2, 0]^\top$. Active inference is run again with the update prior (line 27). This process (lines 17-32) is repeated until either an executable action is found (lines 19-21) or the selected action is \texttt{idle} (lines 28-30). In the first case, $a_\tau$ is executed and the algorithm returns running. In the second case, a failure is returned indicating that no action can be performed to satisfy prior preferences. 
In this example, re-running active inference (line 27)  with $\bm C^{(hold)} = [1, 0]^\top$ and $\bm C^{(reach)} = [2, 0]^\top$  would return the action \texttt{moveTo}. There are no preconditions for this action, thus it can be executed (lines 20-21). The BT keeps being ticked at a certain frequency, and until the object becomes reachable, the same steps as just described will be repeated. As soon as the robot can reach the object, so $l^{(reach)}_c = [1, 0]^\top$, the preference for $\bm C^{(reach)}$ is removed (lines 6-10) and set to $[0, 0]^\top$. This time, when the action \texttt{pick} is selected, its preconditions are satisfied and thus it can be executed. After holding the object, when the BT is ticked no action is needed since the prior is already satisfied. Algorithm~\ref{alg:idea} returns success (lines 14-16) and the task can proceed with the next node in the BT.
\end{example}}
Note that the BT designed for \textit{Scenario 1} was entirely reused in \textit{Scenario 2}, with the only adaptation of the desired object and locations in the BT. In Sec.~\ref{sec:robustness} and Sec.~\ref{sec:runtimeconflicts}, robustness and run-time conflicts resolution are analyzed for \textit{Scenario 1}, but similar considerations can be derived for \textit{scenario 2}.

\subsection{Robustness: Dynamic regions of attraction}\label{sec:robustness}

With our approach we improve robustness compared to classical BTs in two different ways: 1) in terms of task execution, and 2) against noisy sensory readings. We elaborate on the following:

\subsubsection{Robustness of task execution}
With our algorithm we can generate complex regions of attraction dynamically at runtime, alleviating the burden of programming every fallback beforehand in a BT, which is prone to fail in edge cases that haven't been considered at design time. We illustrate this in the example below. Consider Example~\ref{ex:alg_steps}. According to the current world's state, Algorithm~\ref{alg:idea} selects different actions to generate a suitable domain of attraction:
\begin{example}
\label{ex:doa}
\textit{The initial conditions are such that the object is not reachable. Let ${s^{(hold)}}$ be the probabilistic belief of holding an object, and ${s^{(reach)}}$ be the probabilistic belief of its reachability. The domain of attraction generated by Algorithm~\ref{alg:idea} at runtime is depicted in Fig.~\ref{fig:bt_pp} using phase portraits as in \cite{Colledanchise2017}. Actions, when performed, increase the probability of their postconditions.}

\begin{figure}[htb]
    \centering
    \includegraphics[width=0.44\textwidth]{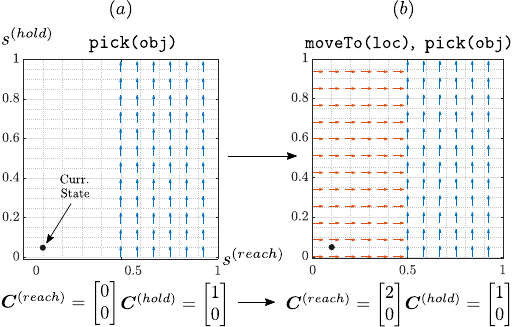}
    \caption{Dynamic domain of attraction generated by Algorithm \ref{alg:idea} for Example~\ref{ex:doa}. (a) relates to the action \texttt{pick(obj)}, and (b) is the composition of \texttt{moveTo(loc)} and \texttt{pick(obj)} after automatically updating the prior preferences}
    \label{fig:bt_pp}
\end{figure}
\end{example}
From Fig.~\ref{fig:bt_pp}, we can see that the goal of the active inference agent is to hold \texttt{obj} so ${\bm C^{(hold)}} = [1\ 0]^\top$. The first selected action is then \texttt{pick(obj)}. However, since the current logical state is not contained in the domain of attraction of the action, the prior preferences are updated with the missing (higher priority) precondition according to the action template provided, that is \texttt{isReachable} so ${\bm C^{(reach)}} = [2, 0]^\top$. This results in a sequential composition of controllers with a stable equilibrium corresponding to the postconditions of \texttt{pick(obj)}. On the other hand, to achieve the same domain of attraction with a classical BT, one would require several additional nodes, as explained in Sec.~\ref{sec:comparison} and visualized in Fig.~\ref{fig:bt_comparison}. Instead of extensively programming fallback behaviors, Algorithm~\ref{alg:idea} endows our actor with deliberation capabilities and allows the agent to reason about the current state of the environment, the desired state, and the available action templates.

\subsubsection{Robustness against noisy sensory readings}
BTs are purely reactive which means that every action, or sub-behavior, is executed in response to an event or a condition determined at the current time step $\tau$. If an instantaneous observation is erroneous, wrong transitions could be triggered because in classical BTs there is no notion and representation of a ``state" that is maintained and updated over time. This might be a problem in the presence of noise in the observations.
\begin{example}
\label{ex:noise}
Consider a generic fallback based on a condition check in a BT, as in Fig.~\ref{fig:robustness_noise}. The perception system, on which the condition check is based produced at time $\tau$ an erroneous observation, for instance, due to poor lighting conditions in an object detection algorithm. If the condition is checked at time $\tau$, a purely reactive system would produce a wrong transition because the condition returns failure, and the fallback would tick the action. 
\end{example}
\begin{figure}[!htb]
    \centering
    \includegraphics[width=0.4\textwidth]{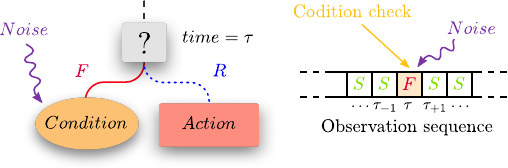}
    \caption{Erroneous transition in a purely reactive BT due to a noisy observation at time $\tau$. $F$, $S$, and $R$ mean respectively, \textit{Failure, } \textit{Success} and \textit{Running}.}
    \label{fig:robustness_noise}
\end{figure}

In active inference, the probabilistic representation of a state helps filter out this kind of spurious sensory input. For instance, a wrong observation like the one above would have caused the robot to be just slightly less confident about that state, with no erroneous transition.

\subsection{Resolving run-time conflicts}
\label{sec:runtimeconflicts}
Unexpected events affecting the system during action selection can lead to conflicts with the initial offline plan. This is the case in one execution  of the mobile manipulation task as in Fig.~\ref{fig:panda_conflicts} where after picking the object and moving in front of the table, the robot senses that the place location is not free. In this situation, a conflict with the offline plan arises, {where there is a preference for two mutually exclusive states, namely holding the red cube but also having the gripper free in order to empty the place location. We describe this situation more formally in Example~\ref{ex:conflict}, and we then explain how such a conflict is resolved.}
\begin{example}
\label{ex:conflict}\textit{
For solving the situation in Fig.~\ref{fig:panda_conflicts} using the BT in Fig.~\ref{fig:bt_experimets}, the robot should 1) hold the object, 2) be at the desired place location, and 3) have the object placed. The preferences are planned offline and the BT populates the relative priors with a unitary preference at runtime (eq.~\eqref{eq:priors}). At this point of the execution, there is a mismatch between the current logical belief about the state ${l^{(place)}}$ and the desired {one, in fact ${\bm C^{(place)}}$ differs from ${l^{(place)}}$ (eq.~\eqref{eq:logicalStates}) because the object is not placed at the desired location.}}
\begin{subequations}
\begin{equation}
\label{eq:priors}
    {\bm C^{(hold)}} = \begin{bmatrix} 1 \\ 0
    \end{bmatrix},\ {\bm C^{(place)}} = \begin{bmatrix} 1 \\ 0
    \end{bmatrix}
\end{equation}
\begin{equation}
\label{eq:logicalStates}
    {l^{(hold)}} = \begin{bmatrix} 1 \\ 0
    \end{bmatrix},\ {l^{(place)}} = \begin{bmatrix} {0} \\ {1}
    \end{bmatrix}
\end{equation}
\end{subequations}
{The selected action with active inference in this situation is \texttt{place(obj,loc)}. The missing precondition on the place location to be free is added to the prior (see ${\bm C^{(free)}}$ in eq.~\eqref{eq:prior_conflicts}) and the action selection is performed again. The only action that can minimize free-energy further is now \texttt{push}}. Then, the missing precondition for this action (i.e. \texttt{!isHolding}) is added in the current prior with higher priority {in ${\bm C^{(hold)}}$} (line 24 in Algorithm~\ref{alg:idea}). 
\begin{equation}
    \label{eq:prior_conflicts}
    {\bm C^{(free)}} = \begin{bmatrix} 2 \\ 0
    \end{bmatrix},\ {\bm C^{(hold)}} = \begin{bmatrix} 1 \\ \textcolor{tudelft-fuchsia}{2}
    \end{bmatrix} 
\end{equation}
\vspace{0mm}
\end{example}

{The required \texttt{push} action to proceed with the task has a conflicting precondition with the offline plan, see ${\bm C^{(hold)}}$ in eq.~\eqref{eq:prior_conflicts}. }
\begin{figure}[!htb]
    \centering
    \includegraphics[width=0.35\textwidth]{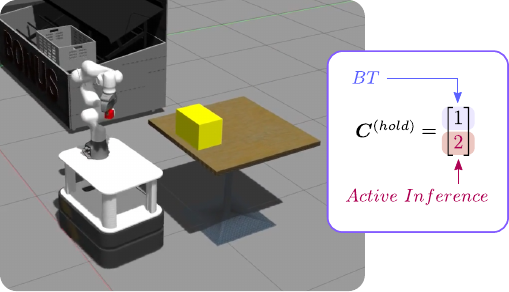}
    \caption{{Example of conflict during mobile manipulation using the BT from Fig.~\ref{fig:bt_experimets}. The BT is defining a preference of 1 over holding the red cube, but the runtime situation requires a free gripper to safely push away an unexpected object. Active inference is then required to achieve an unmet precondition with higher preference.}}
    \label{fig:panda_conflicts}
\end{figure}

Even though the desired state specified in the BT is \texttt{isHolding(obj)}, at this particular moment there is a higher preference for having the gripper free due to a missing precondition to proceed with the plan. Algorithm~\ref{alg:idea} selects then the action that best matches the current prior desires, or equivalently that minimises expected free-energy the most, that is \texttt{placeOnPlate} to obtain ${l^{(hold)}} = [0,\ 1]^\top$. This allows, then, to perform the action \texttt{push}. {Once the place location is free after pushing, the high-priority preference on having the location free is removed from the prior. As a consequence,} there are also no more preferences pushed with high priority over the state ${l^{(hold)}}$ which is only set by the BT as  ${\bm C^{(hold)}} = [1,\ 0]^\top$. The red cube is then picked again and placed on the table since no more conflicts are present. 

Videos of the simulations and experiments can be found online\footnote{\url{https://youtu.be/dEjXu-sD1SI}}. The BTs to encode priors for active inference are implemented using a standard library \cite{bt_library}.

\subsection{Safety}
When designing adaptive behaviors for autonomous robots, attention should be paid to \textit{safety}. The proposed algorithm allows to retain control over the general behavior of the robot and to force a specific routine in case something goes wrong leveraging the structure of BTs. In fact, we are able to include adaptation through active inference only in the parts of the task that require it, keeping all the properties of BTs intact. Safety guarantees for the whole task can easily be added by using a sequence node where the leftmost part is the safety criteria to be satisfied \cite{Colledanchise2017} {by the right part of the behavior}, as shown in Fig.~\ref{fig:safety}. 
\begin{figure}[!htb]
    \centering
    \includegraphics[width=0.35\textwidth]{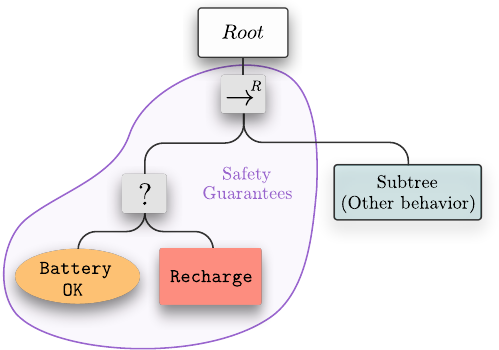}
    \caption{BT with safety guarantees while allowing runtime adaptation.}
    \label{fig:safety}
\end{figure}
In this example, the BT allows avoiding battery drops below a safety-critical value while performing a task. The sub-tree on the right can be any other BT, for instance, the one used to solve \textit{Scenario 1} and \textit{Scenario 2} from Fig.~\ref{fig:bt_experimets}. 

Since, by construction, a BT is executed from left to right, one can assure that  the robot is guaranteed to satisfy the leftmost condition first before proceeding with the rest of the behavior. In our specific case, this allows us to easily override the online decision making process with active inference where needed, in favor of safety routines. {Note that safety guarantees can also be provided in specific parts of the three only, and not necessarily for the whole tree. For example in Fig.~\ref{fig:bt_experimets}, one might ensure navigation at a low speed only while transporting an object to a place location}.

\subsection{Comparison and design principles}
\label{sec:comparison}
\begin{figure*}
    \centering
    \includegraphics[width=0.9\textwidth]{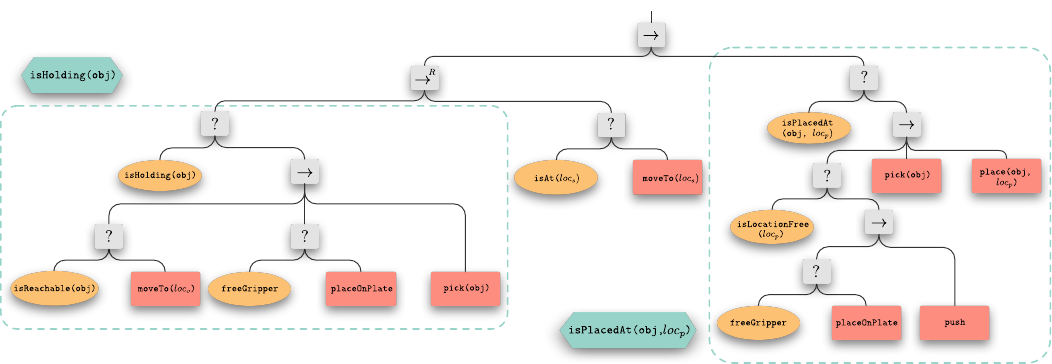}
    \caption{Possible standard BT to perform \textit{Scenario 1} and \textit{Scenario 2} without prior nodes for active inference. Parts of the behavior that require several fallbacks can be substituted by prior nodes for online adaptation instead.}
    \label{fig:bt_comparison}
\end{figure*}

\subsubsection{Comparison with other BT approaches}
The hybrid scheme of active inference and BTs aims at providing a framework for reactive action planning and execution in robotic systems. For this reason, we compare the properties of our approach with standard BTs \cite{Colledanchise2017} and with BTs generated through expansion from goal conditions \cite{Colledanchise2019}. 
\textit{Scenario 1} and \textit{Scenario 2} can be tackled, for instance, by explicitly planning every fallback behavior with classical BTs, as in Fig.~\ref{fig:bt_comparison}. Even if this provides the same reactive behavior as the one generated by Fig.~\ref{fig:bt_experimets}, far more (planning) effort is needed: to solve the same task one would require 12 control nodes, 8 condition nodes, and 7 actions, for a total of 27 nodes compared to the 6 needed in our approach that is an $\sim88\%$ compression.

Importantly, the development effort of a prior node in a BT is the same as a standard action node. It is true that active inference requires specifying the likelihood and transition matrices encoding actions pre- and postconditions, but this has to be done only once while defining the available skills of a robot, and it is independent of the task to be solved. Thus, a designer is not concerned with this when adding a prior node in a BT.

Instead of planning several fallbacks offline, \cite{Colledanchise2019} dynamically expands a BT from a single goal condition, through backchaining, to blend planning and acting online. To solve \textit{Scenario 1} and \textit{Scenario 2} with this approach, one needs to define a goal condition \texttt{isPlacedAt(obj, loc)} similarly to our solution, and define the preconditions of the action \texttt{place(obj, loc)} such that they contain the fact that the robot is holding the object, that the place location is reachable, and it is free. Then, to solve \textit{Scenarios 1} and \textit{2} one needs to define only the final goal condition and run the algorithm proposed in \cite{Colledanchise2019}. Even though this allows to complete tasks similar to what we propose, \cite{Colledanchise2019} comes with a fundamental theoretical limitation: adaptation cannot be selectively added only to specific parts of the tree. The whole behavior is indeed determined at runtime based on preconditions and effects of actions starting from a goal condition. {The addition of safety guarantees can only happen for the whole task, and not in selected parts of the tree derived online.} 

To conclude, the hybrid combination of active inference and BTs allows for combining the advantages of both offline design of BTs and online dynamic expansion. In particular: it drastically reduces the number of necessary nodes planned offline in a BT, it can handle partial observability of the initial state, and it allows to selectively add adaptation and safety guarantees in specific parts of the tree. 

Another important difference between our approach and other BTs solutions is that we introduced the concept of \textit{state} in a BT through the \textit{prior} node for which a probabilistic belief is built, updated, and used for action planning at runtime with uncertain action outcomes.  

Table~\ref{tab:comparison} reports a summary of the comparison with standard BTs and BT with dynamic expansion for \textit{Scenarios 1} and \textit{2}. 

\begin{table}[ht]
\caption{Summary of comparison} \centering % used for centering table
\begin{tabular}{C{1.2cm} C{1cm} C{1.5cm} C{1.5cm} C{1.5cm}} % 
\hline\hline %inserts double horizontal lines
 \textbf{Approach} & \textbf{\# {Hand-crafted nodes}}& \textbf{Unforeseen contingen.} & \textbf{Selective adaptation} & {\textbf{Adaptive safety guarantees}} \\ [0.5ex] % inserts table
%heading
\hline 
Standard BT & 27 & \xmark & \xmark & \cmark\\
\hline
Dynamic BT \cite{Colledanchise2019} & 1 & \cmark & \xmark & {Only for the task as a whole} \\ 
\hline
Ours & 6 & \cmark & \cmark & \cmark \\ 
\hline
\end{tabular}
\label{tab:comparison}
\end{table}

\subsubsection{Comparison with ROSPlan}
One may argue that other solutions such as ROSPlan \cite{cashmore2015rosplan} could also be used for planning and execution in robotics in dynamic environments. ROSPlan leverages automated planning with PDDL2.1, but it is not designed for fast reaction in case of dynamic changes in the environment caused by external events, {and would not work in case of a partially observable initial state}. {Consider \textit{Scenario~1}. At the start of the task, the robot can sense that the gripper is empty and it has access to its current base location. However, it has no information about the state of the table (occupied or free) on which the red cube needs to be placed. At the start of the mission, the table has then 50\% chance of being occupied and a 50\% chance of being free, since it cannot be observed. Yet, the whole task can be planned at a high level, as in Fig. \ref{fig:bt_experimets}, and be executed. Once the robot reaches the placing location, observations regarding the state of the table become available and the internal beliefs can be updated until enough evidence is collected and a decision can be taken. Without knowing the full state at the start, a solution with ROSPlan would require making assumptions on the value of the unknown states, and either planning for the worst-case scenario, which might not even be needed, or failing during execution and re-plan.} 

\subsubsection{Design principles}
We position our work in between two extremes, namely fully offline planning and fully online dynamic expansion of BTs. 
In our method, a designer can decide if to lean towards a fully offline approach or a fully online synthesis. The choice depends  on the task at hand and the modeling of the actions pre- and postconditions. Even though the design of behaviors is still an art, we give some design principles which can be useful in the development of robotic applications using this hybrid BTs and active inference method. Take for instance Fig.~\ref{fig:bt_comparison} and Fig.~\ref{fig:bt_experimets}. Prior nodes for local adaptation can be included in the behavior when there are several contingencies to consider or action preconditions to be satisfied in order to achieve a sub-goal. A designer can: 1) plan offline where the task is certain or equivalently where a small number of things can go wrong; 2) use prior nodes implemented with active inference to decide at runtime the actions to be executed whenever the task is uncertain. This is a compromise between a fully defined plan where the behavior of the robot is predefined in every part of the state space and a fully dynamic expansion of BTs which can result in a sub-optimal action sequence \cite{Colledanchise2019}. This is illustrated in Fig.\ref{fig:bt_experimets}, where the actions for holding and placing an object are chosen online due to various possible unexpected contingencies, whereas the \texttt{moveTo} action is planned. Prior nodes should be used whenever capturing the variability of a part of a certain task would require much effort during offline planning. 
\section{Discussion}
\label{sec:Discussion}
{In this work, we considered a mobile manipulator in a retail store domain  with particular focus on plan execution. In this scope, offline planning is arguably better suited to offload computations at runtime for the parts of the task that do not change frequently. In our case, this was the sequence of states to stock a product that was encoded in a BT. On the other hand, at the cost of additional online computations, local online planning is better suited for execution in uncertain environments such as a busy supermarket, because one can avoid planning beforehand for every contingency. We achieved this through active inference. With the proposed method we can leverage the complementary advantages of both offline and online planning, and in the following we discuss in detail the choice of using BTs and active inference specifically.}

\subsection{Active inference as a planning node in a BT}
We opted for the use of active inference with action preconditions as a planning node in the BT because this allows achieving online PDDL-style planning with noisy observations and partially observable probabilistic states. An alternative solution to our approach could be to use a simple PDDL planner in conjunction with a filtering scheme. By re-planning for a small sub-task at the same frequency used for the active inference node, one can achieve similar reactiveness to our approach. However, by means of active inference, one can make use of the full probabilistic information on the states, and one does not require full knowledge of the symbolic initial state.

{\subsection{Why choosing BTs}
An experienced roboticist could also wonder why we opted for BTs in the first place, instead of other PDDL-style planning approaches to encode the solution to a task. First of all, the focus of this paper is on the runtime adaptability and reactivity in dynamic environments with partially observable initial state, and not on the generation of complex offline plans.}

{In this context, BTs are advantageous because they are designed to dispatch and monitor actions execution at runtime. This allows to quickly react to changes in the environment that are not necessarily a consequence of the robot's own actions. A plan that is generated through PDDL planning and executed bypassing BTs cannot provide this type of reactivity unless one defines specific re-planning strategies to mimic BTs' reactiveness.}

{However, we see potential extensions of this work by combining it with other PDDL planning methods. For instance, PDDL can be used to automatically generate plans for which their execution is optimized through BTs \cite{martin2021}. This allows for instance to take advantage of parallel action execution and would remove the need to hand-design a BT. Additionally, an action at runtime can be executed as soon as its requirements are available instead of waiting for what is established offline by the planner.}
\subsection{Why choose active inference}
Active inference could potentially be substituted by other valid POMDP approaches. We see two possible ways of doing so, that is either using an \textit{offline} or an \textit{online} POMDP solver.

First, one could {solve} a policy \textit{offline} and then use it for online decision making. This approach {can be more effective than active inference} once the transition matrices, the reward, and the task are fixed. However, the addition of new symbolic actions (so new skills as transitions), or a substantial change in the task while using the same skills, would require re-computing the policy. {In addition, planning offline for all possible states and actions combinations is a much larger problem than computing a plan online for the current state only}. As concluded in \cite{moSSP}, {for the parts of a task subject to frequent unpredictable changes}, computing {locally} an online plan as we do with active inference is preferable to offline policies.

Second, one could perform \textit{online} decision making without offline {computations and achieve similar performance}, see \cite{paquet2005online, ye2017despot}. {Compared to both offline and online POMDPs, however}, active inference exposes extra model parameters to bridge abstract common sense knowledge with discrete decision making. {These extra model parameters can be updated with runtime information to adapt plans on the fly}. Take for instance the prior over plans $p(\pi) = Cat(\bm E)$. The $\bm E$ vector can be updated at runtime to steer the decision making while computing the posterior distribution $\bm \pi = \sigma(\ln\bm E -\bm G_\pi - \bm F_\pi)$ \cite{smith2021step}. This vector can be used to {encode common sense and habits \cite{smith2021step, hesp2021}, but can also be used to adapt the plans online due for instance to runtime component failure.} This opens up many possibilities for extensions, to be explored in future work. We are particularly interested in active inference because it is a flexible and unified framework that connects different branches of control theory at different abstraction levels.
Active inference can unify (i) abstract decision making with guarantees, as in this paper, (ii) adaptive \cite{pezzato2020novel} and fault tolerant \cite{pezzato2020active, baioumy2021fault, baioumy2021towards} torque control, as well as (iii) state estimation and learning.

\section{Conclusions}
\label{sec:Conclusions}
In this work, we tackled the problem of action planning and execution in real-world robotics. We addressed two open challenges, namely hierarchical deliberation, and continual online planning, by combining BTs and active inference.
The proposed algorithm and its core idea are general and independent of the particular robot platform and task. Our solution provides local reactivity to unforeseen situations while keeping the initial plan intact. In addition, it is possible to easily add safety guarantees to override the online decision making process thanks to the properties of BTs. We showed how robotic tasks can be described in terms of free-energy minimization, and we introduced action preconditions and conflict resolution for active inference by means of dynamic priors. This means that a robot can locally set its own sub-goals to resolve a local inconsistency, and then return to the initial plan specified in the BT. We performed a theoretical analysis of the convergence and robustness of the algorithm, and the effectiveness of the approach is demonstrated on two different mobile manipulators and different tasks, both in simulation and real experiments.

%\section*{Acknowledgment}
%This research was supported by Ahold Delhaize. All content represents the opinion of the author(s), which is not necessarily shared or endorsed by their respective sponsors. 

\appendices
\section{Generative models}
\label{sec:AppendixP}
{Consider the generative model in active inference $P(\bar{o},\bar{s},\bm \eta,\pi)$. By using the chain rule, we can write:}
\begin{equation}
   P(\bar{o},\bar{s},\bm \eta,\pi) = P(\bar{o}|\bar{s},\bm \eta,\pi)P(\bar{s}|\bm \eta,\pi)P(\bm \eta|\pi)P(\pi)
\end{equation}
Note that $\bar{o}$ is conditionally independent from the model parameters $\bm \eta$ and $\pi$ given $\bar{s}$. In addition, under the Markov property, the next state and current observations depend only on the current state:
\begin{equation}
     P(\bar{o}|\bar{s},\bm \eta, \pi) = \prod_{\tau=1}^T{P(o_\tau|s_\tau)}
\end{equation}
The model is further simplified considering that $\bar{s}$ and $\bm \eta$ are conditionally independent given $\pi$: 
\begin{equation}
    P(\bar{s}|\bm \eta,\pi) = \prod_{\tau=1}^T{P(s_\tau|s_{\tau-1}, \pi)}
\end{equation}
Finally, consider the model parameters explicitly:
\begin{align}
    \nonumber
    &P(\bar{o},\bar{s},\bm \eta,\pi) = P(\bar{o},\bar{s},\bm A,\bm B,\bm D,\pi) =  &\\
    & P(\pi)P(\bm A)P(\bm B)P(\bm D)\prod_{\tau=1}^T P(s_\tau|s_{\tau-1},\pi)P(o_\tau|s_\tau) & 
\end{align}
{$P(\bm A), P(\bm B), P(\bm D)$ are Dirichlet distributions over the model parameters, \cite{friston2017active}. In case the model parameters are fixed by the user, as in this work, it holds:
\begin{equation}
    P(\bar{o},\bar{s},\pi) = P(\pi)\prod_{\tau=1}^T P(s_\tau|s_{\tau-1},\pi)P(o_\tau|s_\tau)
    \label{eq:model_fixed_par}
\end{equation}
}

Given the generative model above, we are interested in finding the posterior hidden causes of sensory data. For the sake of these derivations, we consider that the parameters associated with the task are known and do not introduce uncertainty. Using Bayes rule:
\begin{equation}
    \label{eq:app_bayes}
    P(\bar{s},\pi|\bar{o}) = \frac{P(\bar{o}|\bar{s},\pi)P(\bar{s},\pi)}{P(\bar{o})} 
\end{equation}
Computing the model evidence $P(\bar{o})$ exactly is a well-known and often intractable problem in Bayesian statistics. The exact posterior is then computed minimizing the Kullback-Leibler divergence ($D_{KL}$, or KL-Divergence) with respect to an approximate posterior distribution $Q(\bar{s},\pi)$. Doing so, we can define the free-energy as a functional of approximate posterior beliefs which result in an upper bound on surprise. By definition $D_{KL}$ is a non-negative quantity given by the expectation of the logarithmic difference between $Q(\bar{s},\pi)$ and $P(\bar{s},\pi|\bar{o})$. Applying the KL-Divergence:
\begin{align}
\label{eq:app_dkl}
\nonumber
    &D_{KL}\left[ Q(\bar{s},\pi)||P(\bar{s},\pi|\bar{o}) \right] = & \\
    &\mathbb{E}_{Q(\bar{s},\pi)}\left[\ln{Q(\bar{s},\pi)}-\ln{P(\bar{s},\pi|\bar{o})}\right]\geq 0&
\end{align}
$D_{KL}$ is the information loss when $Q$ is used instead of $P$. Considering equation \eqref{eq:app_bayes} and the chain rule, equation \eqref{eq:app_dkl} can be rewritten as:
\begin{align}
\nonumber
    D_{KL}\left[\cdot\right] &= \mathbb{E}_{Q(\bar{s},\pi)}\left[    \ln{Q(\bar{s},\pi)}-\ln{\frac{P(\bar{o},\bar{s},\pi)}{P(\bar{o})}}
\right]\\
    \label{eq:app_Fupper}
     &= \underbrace{\mathbb{E}_{Q(\bar{s},\pi)}\left[\ln{Q(\bar{s},\pi)}-\ln{P(\bar{o},\bar{s},\pi)}\right]}_{F\left[Q(\bar{s},\pi)\right]}+\ln{P(\bar{o})}
\end{align}
We have just defined the free-energy as the upper bound of surprise:
\begin{equation}
    F\left[Q(\bar{s},\pi)\right] \geq - \ln{P(\bar{o})}
\end{equation}

\section{Variational Free-energy}
\label{sec:AppendixF}
To fully characterize the free-energy in equation \eqref{eq:app_Fupper}, we need to specify a form for the approximate posterior $Q(\bar{s},\pi)$. There are different ways to choose a family of probability distributions \cite{schwobel2018active}, compromising between complexity and accuracy of the approximation. In this work, we choose the mean-field approximation. It holds:
\begin{equation}
    Q(\bar{s},\pi) = Q(\bar{s}|\pi)Q(\pi) = Q(\pi)\prod_{\tau=1}^TQ(s_\tau|\pi)
\end{equation}
Under mean-field approximation, the plan-dependent states at each time step are approximately independent of the states at any other time step. We can now find an expression for the variational free-energy. Considering the mean-field approximation {and the generative model in eq.~\eqref{eq:model_fixed_par}} we can write:
\begin{eqnarray}
    \nonumber
    F\left[Q(\bar{s},\pi)\right] = \mathbb{E}_{Q(\bar{s},\pi)}\bigg[\ln{Q(\pi)} + \sum_{\tau=1}^T\ln{Q(s_\tau|\pi)}\\
    -\ln{P(\pi)} -\sum_{\tau=1}^T\ln{P(s_\tau|s_{\tau-1},\pi)} -\sum_{\tau=1}^T\ln{P(o_\tau|s_\tau)}\bigg]
\end{eqnarray}
Since $Q(\bar{s},\pi) = Q(\bar{s}|\pi)Q(\pi)$, and since the expectation of a sum is the sum of the expectation, we can write:
\begin{equation}
\label{eq:app_Ftot}
    F\left[\cdot\right] = D_{KL}\left[Q(\pi)||P(\pi)\right] +  \mathbb{E}_{Q(\pi)}\left[F(\pi)\left[Q(\bar{s}|\pi)\right] \right]
\end{equation}
where
\begin{align}
    \label{eq:app_Fpi}
    \nonumber
    & F(\pi)\left[Q(\bar{s}|\pi)\right] = \mathbb{E}_{Q(\bar{s}|\pi)}\bigg[\sum_{\tau=1}^T\ln{Q(s_\tau|\pi)}&\\
    &-\sum_{\tau=1}^T\ln{P(s_\tau|s_{t-\tau}, \pi)} -\sum_{\tau=1}^T\ln{P(o_\tau|s_\tau)}\bigg]&
\end{align}
One can notice that $F(\pi)$ is accumulated over time, or in other words, it is the sum of free energies over time and plans:
\begin{equation}
\label{eq:app_Fpitau}
    F(\pi) = \sum_{\tau=1}^TF(\pi,\tau)
\end{equation}
Substituting the agent's belief about the current state at time $\tau$ given $\pi$ with $\bm s_\tau^\pi$, we obtain a matrix form for $F(\pi,\tau)$ that we can compute given the generative model:
\begin{eqnarray}
    F(\pi) = \sum_{\tau=1}^T \bm s_\tau^{\pi\top}\bigg[\ln{\bm s_\tau^\pi} - \ln{{(}\bm B_{a_{\tau-1}}\bm s_{\tau-1}^{\pi}}{)} - \ln{{(}\bm A^\top \bm o_\tau{)}} \bigg]
\end{eqnarray}
Given a plan $\pi$, the probability of state transition $P(s_\tau|s_{\tau-1},\pi)$ is given by the transition matrix under plan $\pi$ at time $\tau$, multiplied by the probability of the state at the previous time step. In the special case of $\tau=1$, we can write:
\begin{equation}
    F(\pi,1) =  \bm s_1^{\pi\top} \big[\ln{\bm s_1^\pi} - \ln{\bm D} - \ln{{(}\bm A^\top \bm o_1{)}} \big]
\end{equation}
Finally, we can compute the expectation of the plan dependant variational free-energy $F(\pi)$ as $ \mathbb{E}_{Q(\pi)} \big[F(\pi)\big] = \bm\pi^\top \bm F_{\pi}$. We indicate $\bm F_{\pi} = (F(\pi_1),F(\pi_2)...)^\top$ for every allowable plan. To derive state and plan updates that minimize free-energy, $F$ in equation \eqref{eq:app_Ftot} is partially differentiated and set to zero, as we will see in the next appendixes. 
\section{State estimation}
\label{sec:AppendixS}
We differentiate $F$ with respect to the sufficient statistics of the probability distribution of the states. Note that the only part of $F$ dependent on the states is $F(\pi)$. Then:
\begin{align}
    \nonumber
    &\frac{\partial F}{\partial\bm s_{\tau}^{\pi}} = \frac{\partial F}{\partial F(\pi)}\frac{\partial F(\pi)}{\partial\bm s_{\tau}^{\pi}} = \bm\pi^\top \big[ \bm 1 + \ln{\bm s_\tau^\pi} 
    - \ln{{(}\bm B_{a_{\tau-1}}\bm s_{\tau-1}^{\pi}}{)} & \\
    &- \ln{{(}\bm B^\top_{a_{\tau}} \bm s_{\tau+1}^{\pi}{)}}
    - \ln{{(}\bm A^\top \bm o_\tau{)}} \big]&
\end{align}
Setting the gradient to zero and using the softmax function for normalization:
\begin{align}
    \bm s_{\tau}^{\pi} = \sigma(\ln{{(}\bm B_{a_{\tau-1}}\bm s_{\tau-1}^{\pi}}{)} 
    + \ln{{(}\bm B^\top_{a_{\tau}} \bm s_{\tau+1}^{\pi}{)}}
    + \ln{{(}\bm A^\top \bm o_\tau{)}})&
\end{align}
Note that the softmax function is insensitive to the constant $\bm 1$. Also, for $\tau=1$ the term $\ln{{(}\bm B_{a_{\tau-1}}\bm s_{\tau-1}^{\pi}}{)}$ is replaced by $\bm D$. Finally, $\ln{{(}\bm A^\top \bm o_\tau{)}}$ contributes only to past and present time steps, so for this term is null for $t<\tau\leq T$ since those observations are still to be received. 
\section{Expected Free-energy}
\label{sec:AppendixG}
We indicate with  $G(\pi)$ the expected free-energy obtained over future time steps until the time horizon $T$ while following a plan $\pi$. Basically, this is the variational free-energy of future trajectories which measures the plausibility of plans according to future predicted observations \cite{sajid2021active}. To compute it we take the expectation of variational free-energy under the posterior predictive distribution $P(o_\tau|s_\tau)$. Following \cite{sajid2021active} we can write: 
\begin{equation}
    G(\pi) = \sum_{\tau=t+1}^T G(\pi,\tau)
\end{equation}
then:
\begin{eqnarray}
\nonumber
    G(\pi,\tau) = \mathbb{E}_{\tilde{Q}}\big[\ln{Q(s_\tau|\pi)}- \ln{P(o_\tau,s_\tau|s_{\tau-1})}\big]\\
   = \mathbb{E}_{\tilde{Q}}\big[\ln{Q(s_\tau|\pi)}- \ln{P(s_\tau|o_\tau,s_{\tau-1})} - \ln{P(o_\tau)} \big]
\end{eqnarray}
where $\tilde{Q} = P(o_\tau|s_\tau)Q(s_\tau|\pi)$.
The expected free-energy is:
\begin{equation}
    G(\pi,\tau) \geq  \mathbb{E}_{\tilde{Q}}\big[\ln{Q(s_\tau|\pi)}- \ln{Q(s_\tau|o_\tau,s_{\tau-1}, \pi)} - \ln{P(o_\tau)} \big]
\end{equation}
Equivalently, we can express the expected free-energy in terms of preferred observations \cite{DaCosta2020}:
\begin{equation}
    G(\pi,\tau) =  \mathbb{E}_{\tilde{Q}}\big[\ln{Q(o_\tau|\pi)}- \ln{Q(o_\tau|s_\tau, s_{\tau-1}, \pi)-\ln{P(o_\tau)}}\big]
\end{equation}
Making use of $Q(o_\tau|s_\tau,\pi)=P(o_\tau|s_\tau)$ since the predicted observations in the future are only based on $\bm A$ which is plan independent given $s_\tau$, we have:
\begin{equation}
\label{eq:app_G}
    G(\pi,\tau) = \underbrace{D_{KL}\left[Q(o_\tau|\pi)||P(o_\tau)\right]}_{Expected\  cost}+\underbrace{\mathbb{E}_{Q(s_\tau|\pi)}\left[H(P(o_\tau|s_\tau))\right]}_{Entropy}
\end{equation}
were $H[P(o_\tau|s_\tau)]=\mathbb{E}_{P(o_\tau|s_\tau)}\left[-\ln{P(o_\tau|s_\tau)}\right]$ is the entropy. We are now ready to express the expected free-energy in matrix form, such that we can compute it. From the previous equation, one can notice that plan selection aims at minimizing the expected cost and ambiguity. The latter relates to the uncertainty about future observations given hidden states. In a sense, plans tend to bring the agent to future states that generate unambiguous information over states. On the other hand, the cost is the difference between predicted and prior beliefs about final states. Plans are more likely if they minimize cost, and lead to observations that match prior desires. Minimizing $G$ leads to both exploitative (cost minimizing) and explorative (ambiguity minimizing) behavior. This results in a balance between goal-oriented and novelty-seeking behaviors 

Substituting the sufficient statistics in equation \eqref{eq:app_G}, and recalling that the generative model specifies $P(o_\tau) = \bm C$, {one obtains \cite{smith2021step}:}
\begin{equation}
{G(\pi,\tau) = \underbrace{\bm o_\tau^{\pi\top} \left[\ln{\bm o_\tau^\pi}-\ln\bm C\right]}_{Reward\ seeking} \underbrace{- diag(\bm A ^\top \ln\bm A)^\top\bm s_\tau^\pi}_{Information\ seeking}}
\end{equation}
Note that prior preferences are passed through the softmax function before computing the logarithm.
\section{Updating plan distribution}
\label{sec:AppendixPi}
The update rule for the distribution over possible plans follows directly from the variational free-energy:
\begin{equation}
F\left[\cdot\right] = D_{KL}\left[Q(\pi)||P(\pi)\right] +  \bm \pi ^\top \bm F_\pi
\end{equation}
The first term of the equation above can be written as:
\begin{equation}
    D_{KL}\left[Q(\pi)||P(\pi)\right] = \mathbb{E}_{Q(\pi)}\left[\ln{Q(\pi)}-\ln{P(\pi)}\right]
\end{equation}
Recalling that the approximate posterior over policies is a softmax function of the expected free-energy $Q(\pi)=\sigma(-G(\pi))$ \cite{DaCosta2020,friston2017active}, and taking the gradient with respect to $\bm \pi$ it results:
\begin{equation}
    \frac{\partial F}{\partial \bm\pi} = \ln{\bm \pi} + \bm G_\pi + \bm F_\pi + \bm 1
\end{equation}
{where $\bm G_\pi = (G(\pi_1),G(\pi_2),...)^\top$}
Finally, setting the gradient to zero and normalizing through softmax, the posterior distribution over plans is obtained:
\begin{equation}
    \bm \pi = \sigma(-\bm G_\pi - \bm F_\pi)
\end{equation}
The plan that an agent should pursue is the most likely one.  

\ifCLASSOPTIONcaptionsoff
  \newpage
\fi

\bibliography{IEEEabrv,main}

% Generated by IEEEtran.bst, version: 1.14 (2015/08/26)
\begin{thebibliography}{10}
\providecommand{\url}[1]{#1}
\csname url@samestyle\endcsname
\providecommand{\newblock}{\relax}
\providecommand{\bibinfo}[2]{#2}
\providecommand{\BIBentrySTDinterwordspacing}{\spaceskip=0pt\relax}
\providecommand{\BIBentryALTinterwordstretchfactor}{4}
\providecommand{\BIBentryALTinterwordspacing}{\spaceskip=\fontdimen2\font plus
\BIBentryALTinterwordstretchfactor\fontdimen3\font minus
  \fontdimen4\font\relax}
\providecommand{\BIBforeignlanguage}[2]{{%
\expandafter\ifx\csname l@#1\endcsname\relax
\typeout{** WARNING: IEEEtran.bst: No hyphenation pattern has been}%
\typeout{** loaded for the language `#1'. Using the pattern for}%
\typeout{** the default language instead.}%
\else
\language=\csname l@#1\endcsname
\fi
#2}}
\providecommand{\BIBdecl}{\relax}
\BIBdecl

\bibitem{ghallab2014}
M.~Ghallab, D.~Nau, and P.~Traverso, ``The actor's view of automated planning
  and acting: A position paper,'' \emph{Artificial Intelligence}, vol. 208, pp.
  1 -- 17, 2014.

\bibitem{nau2015}
D.~S. Nau, M.~Ghallab, and P.~Traverso, ``Blended planning and acting:
  Preliminary approach, research challenges,'' in \emph{Twenty-Ninth AAAI
  Conference on Artificial Intelligence}, 2015.

\bibitem{Colledanchise2017}
M.~Colledanchise and P.~Ogren, ``{How Behavior Trees Modularize Hybrid Control
  Systems and Generalize Sequential Behavior Compositions, the Subsumption
  Architecture, and Decision Trees},'' \emph{IEEE Transactions on Robotics},
  vol.~33, no.~2, pp. 372--389, 2017.

\bibitem{Colledanchise2019}
M.~Colledanchise, D.~Almeida, M, and P.~Ögren, ``Towards blended reactive
  planning and acting using behavior tree,'' \emph{in IEEE International
  Conference on Robotics and Automation (ICRA)}, 2019.

\bibitem{safronov2020task}
E.~Safronov, M.~Colledanchise, and L.~Natale, ``Task planning with belief
  behavior trees,'' \emph{IEEE/RSJ International Conference on Intelligent
  Robots and Systems (IROS)}, 2020.

\bibitem{Paxton2019}
C.~{Paxton}, N.~{Ratliff}, C.~{Eppner}, and D.~{Fox}, ``Representing robot task
  plans as robust logical-dynamical systems,'' in \emph{2019 IEEE/RSJ
  International Conference on Intelligent Robots and Systems (IROS)}, 2019, pp.
  5588--5595.

\bibitem{Garrett2020}
C.~R. Garrett, C.~Paxton, T.~Lozano-P{\'e}rez, L.~P. Kaelbling, and D.~Fox,
  ``Online replanning in belief space for partially observable task and motion
  problems,'' in \emph{2020 IEEE International Conference on Robotics and
  Automation (ICRA)}.\hskip 1em plus 0.5em minus 0.4em\relax IEEE, 2020, pp.
  5678--5684.

\bibitem{anil_colored_noise}
A.~Meera and M.~Wisse, ``Free energy principle based state and input observer
  design for linear systems with colored noise,'' in \emph{2020 American
  Control Conference (ACC)}, 2020, pp. 5052--5058.

\bibitem{baioumy2020active}
M.~Baioumy, P.~Duckworth, B.~Lacerda, and N.~Hawes, ``Active inference for
  integrated state-estimation, control, and learning,'' in \emph{International
  conference on Robotics and Automation, ICRA}, 2021.

\bibitem{pezzato2020active}
C.~Pezzato, M.~Baioumy, C.~H. Corbato, N.~Hawes, M.~Wisse, and R.~Ferrari,
  ``Active inference for fault tolerant control of robot manipulators with
  sensory faults,'' in \emph{International Workshop on Active Inference}.\hskip
  1em plus 0.5em minus 0.4em\relax Springer, 2020, pp. 20--27.

\bibitem{baioumy2021fault}
M.~Baioumy, C.~Pezzato, R.~Ferrari, C.~H. Corbato, and N.~Hawes,
  ``Fault-tolerant control of robot manipulators with sensory faults using
  unbiased active inference,'' \emph{European Control Conference, ECC}, 2021.

\bibitem{pezzato2020novel}
C.~Pezzato, R.~Ferrari, and C.~H. Corbato, ``A novel adaptive controller for
  robot manipulators based on active inference,'' \emph{IEEE Robotics and
  Automation Letters}, vol.~5, no.~2, pp. 2973--2980, 2020.

\bibitem{oliver}
G.~Oliver, P.~Lanillos, and G.~Cheng, ``An empirical study of active inference
  on a humanoid robot,'' \emph{IEEE Transactions on Cognitive and Developmental
  Systems}, 2021.

\bibitem{friston2}
K.~J. Friston, ``The free-energy principle: a unified brain theory?''
  \emph{Nature Reviews Neuroscience}, vol. 11(2), pp. 27--138, 2010.

\bibitem{buckley}
C.~Buckley, C.~Kim, S.~McGregor, and A.~Seth, ``The free energy principle for
  action and perception: A mathematical review,'' \emph{Journal of Mathematical
  Psychology}, vol.~81, pp. 55--79, 2017.

\bibitem{tutorial}
R.~Bogacz, ``A tutorial on the free-energy framework for modelling perception
  and learning,'' \emph{Journal of mathematical psychology}, 2015.

\bibitem{friston1}
K.~J. Friston, J.~Mattout, and J.~Kilner, ``Action understanding and active
  inference,'' \emph{Biological cybernetics}, vol. 104(1-2), 2011.

\bibitem{friston3}
K.~J. Friston, J.~Daunizeau, and S.~Kiebel, ``Action and behavior: a
  free-energy formulation,'' \emph{Biological cybernetics}, vol. 102(3), 2010.

\bibitem{friston2012active}
K.~Friston, S.~Samothrakis, and R.~Montague, ``Active inference and agency:
  optimal control without cost functions,'' \emph{Biological cybernetics}, vol.
  106, no. 8-9, pp. 523--541, 2012.

\bibitem{friston2017active}
K.~Friston, T.~FitzGerald, F.~Rigoli, P.~Schwartenbeck, and G.~Pezzulo,
  ``Active inference: a process theory,'' \emph{Neural computation}, vol.~29,
  no.~1, pp. 1--49, 2017.

\bibitem{sajid2021active}
N.~Sajid, P.~J. Ball, T.~Parr, and K.~J. Friston, ``Active inference:
  demystified and compared,'' \emph{Neural Computation}, vol.~33, no.~3, pp.
  674--712, 2021.

\bibitem{schwartenbeck2019computational}
P.~Schwartenbeck, J.~Passecker, T.~U. Hauser, T.~H. FitzGerald, M.~Kronbichler,
  and K.~J. Friston, ``Computational mechanisms of curiosity and goal-directed
  exploration,'' \emph{Elife}, vol.~8, p. e41703, 2019.

\bibitem{Kaplan2018}
R.~Kaplan and K.~J. Friston, ``Planning and navigation as active inference,''
  \emph{Biological cybernetics}, vol. 112, no.~4, pp. 323--343, 2018.

\bibitem{Colledanchise2018}
M.~Colledanchise and P.~Ögren, \emph{Behavior trees in robotics and AI: an
  introduction}.\hskip 1em plus 0.5em minus 0.4em\relax ser. Chapman and
  Hall/CRC Artificial Intelligence and Robotics Series. CRC Press, Taylor \&
  Francis Group, 2018.

\bibitem{macenski2020marathon}
S.~Macenski, F.~Mart{\'\i}n, R.~White, and J.~G. Clavero, ``The marathon 2: A
  navigation system,'' in \emph{2020 IEEE/RSJ International Conference on
  Intelligent Robots and Systems (IROS)}.\hskip 1em plus 0.5em minus
  0.4em\relax IEEE, 2020, pp. 2718--2725.

\bibitem{Orkin2003}
J.~Orkin, ``Applying goal-oriented action planning to games,'' \emph{AI Game
  Programming Wisdom}, vol.~2, pp. 217--228, 2003.

\bibitem{Orkin2006}
------, ``{Three states and a plan: the AI of FEAR},'' in \emph{Game Developers
  Conference}, 2006, pp. 1--18.

\bibitem{Kaelbling2011}
L.~P. Kaelbling and T.~Lozano-P{\'{e}}rez, ``{Hierarchical task and motion
  planning in the now},'' \emph{Proceedings - IEEE International Conference on
  Robotics and Automation}, pp. 1470--1477, 2011.

\bibitem{PackKaelbling2013}
L.~{Pack Kaelbling} and T.~{Lozano-P{\'{e}}rez}, ``{Integrated task and motion
  planning in belief space},'' \emph{International Journal of Robotics
  Research}, pp. 1--60, 2013.

\bibitem{Levihn2013}
M.~Levihn, L.~P. Kaelbling, T.~Lozano-P{\'{e}}rez, and M.~Stilman, ``{Foresight
  and reconsideration in hierarchical planning and execution},'' in \emph{IEEE
  International Conference on Intelligent Robots and Systems}, 2013, pp.
  224--231.

\bibitem{Erol1994}
K.~Erol, J.~Hendler, and D.~S. Nau, ``Htn planning: Complexity and
  expressivity,'' in \emph{AAAI}, vol.~94, 1994, pp. 1123--1128.

\bibitem{Ghallab2016}
M.~Ghallab, D.~Nau, and P.~Traverso, \emph{Automated planning and
  acting}.\hskip 1em plus 0.5em minus 0.4em\relax Cambridge University Press,
  2016.

\bibitem{DaCosta2020}
L.~Da~Costa, T.~Parr, N.~Sajid, S.~Veselic, V.~Neacsu, and K.~Friston, ``Active
  inference on discrete state-spaces: a synthesis,'' \emph{Journal of
  Mathematical Psychology}, vol.~99, p. 102447, 2020.

\bibitem{smith2021step}
\BIBentryALTinterwordspacing
R.~Smith, K.~J. Friston, and C.~J. Whyte, ``A step-by-step tutorial on active
  inference and its application to empirical data,'' \emph{Journal of
  Mathematical Psychology}, vol. 107, p. 102632, 2022. [Online]. Available:
  \url{https://www.sciencedirect.com/science/article/pii/S0022249621000973}
\BIBentrySTDinterwordspacing

\bibitem{hesp2021}
C.~Hesp, R.~Smith, T.~Parr, M.~Allen, K.~J. Friston, and M.~J. Ramstead,
  ``Deeply felt affect: The emergence of valence in deep active inference,''
  \emph{Neural computation}, vol.~33, no.~2, pp. 398--446, 2021.

\bibitem{pymdp}
C.~Heins, B.~Millidge, D.~Demekas, B.~Klein, K.~Friston, I.~Couzin, and
  A.~Tschantz, ``pymdp: A python library for active inference in discrete state
  spaces,'' \emph{Journal of Open Source Software}, vol.~7, no.~73, p. 4098,
  2022.

\bibitem{bt_library}
{Davide Faconti}, ``{BehaviorTree.CPP},'' \url{ https://www.behaviortree.dev/}.

\bibitem{friston2016AILearning}
K.~Friston, T.~FitzGerald, F.~Rigoli, P.~Schwartenbeck, J.~O'Doherty, and
  G.~Pezzulo, ``Active inference and learning,'' \emph{Neuroscience \&
  Biobehavioral Reviews}, vol.~68, pp. 862--879, 2016.

\bibitem{Burridge1999}
R.~R. Burridge, A.~A. Rizzi, and D.~E. Koditschek, ``{Sequential composition of
  dynamically dexterous robot behaviors},'' \emph{International Journal of
  Robotics Research}, vol.~18, no.~6, pp. 534--555, 1999.

\bibitem{Najafi2015}
E.~Najafi, R.~Babu{\v{s}}ka, and G.~A. Lopes, ``{An application of sequential
  composition control to cooperative systems},'' in \emph{2015 10th
  International Workshop on Robot Motion and Control, RoMoCo 2015}, 2015, pp.
  15--20.

\bibitem{cashmore2015rosplan}
M.~Cashmore, M.~Fox, D.~Long, D.~Magazzeni, B.~Ridder, A.~Carrera,
  N.~Palomeras, N.~Hurtos, and M.~Carreras, ``Rosplan: Planning in the robot
  operating system,'' in \emph{Proceedings of the International Conference on
  Automated Planning and Scheduling}, vol.~25, no.~1, 2015.

\bibitem{martin2021}
F.~Mart{\'\i}n, M.~Morelli, H.~Espinoza, F.~J. Lera, and V.~Matell{\'a}n,
  ``Optimized execution of pddl plans using behavior trees,'' in \emph{20th
  International Conference on Autonomous Agents and MultiAgent Systems
  (AAMAS)}, 2021, pp. 1596--1598.

\bibitem{moSSP}
M.~Baioumy, B.~Lacerda, P.~Duckworth, and N.~Hawes, ``On solving a stochastic
  shortest-path markov decision process as probabilistic inference,'' in
  \emph{2nd International Workshop on Active Inference (IWAI)}, 2021.

\bibitem{paquet2005online}
S.~Paquet, L.~Tobin, and B.~Chaib-Draa, ``An online pomdp algorithm for complex
  multiagent environments,'' in \emph{Proceedings of the fourth international
  joint conference on Autonomous agents and multiagent systems}, 2005, pp.
  970--977.

\bibitem{ye2017despot}
N.~Ye, A.~Somani, D.~Hsu, and W.~S. Lee, ``Despot: Online pomdp planning with
  regularization,'' \emph{Journal of Artificial Intelligence Research},
  vol.~58, pp. 231--266, 2017.

\bibitem{baioumy2021towards}
M.~Baioumy, C.~Pezzato, C.~H. Corbato, N.~Hawes, and R.~Ferrari, ``Towards
  stochastic fault-tolerant control using precision learning and active
  inference,'' in \emph{International Workshop on Active Inference}.\hskip 1em
  plus 0.5em minus 0.4em\relax Springer, 2021.

\bibitem{schwobel2018active}
S.~Schw{\"o}bel, S.~Kiebel, and D.~Markovi{\'c}, ``Active inference, belief
  propagation, and the bethe approximation,'' \emph{Neural computation},
  vol.~30, no.~9, pp. 2530--2567, 2018.

\end{thebibliography}
\bibliographystyle{IEEEtran}

\vspace{-15mm}
\begin{IEEEbiography}[{\includegraphics[width=1in,height=1.25in,clip,keepaspectratio]{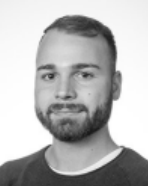}}]{Corrado Pezzato}
received the B.Sc. degree (with Hons.) in Automation Engineering at the Alma Mater Studiorum, Bologna, Italy, in 2017. He received his M.Sc. in Systems and Control  (with Hons.) at the Delft University of Technology in 2019, where he is currently working towards his Ph.D. degree in robotics. He is part of AIRLab, the AI for Retail Lab in Delft. His research interests include low-level control, high-level decision making, and their interconnection, with a strong focus on robotics and active inference.
\end{IEEEbiography}
\vspace{-15mm}

\begin{IEEEbiography}[{\includegraphics[width=1in,height=1.25in,clip,keepaspectratio]{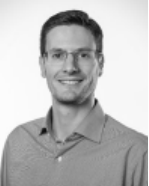}}]{Carlos Hern\'andez Corbato}
received the Graduate degree (with Hons.) in industrial engineering (2006) and the M.Sc. in 2008, and Ph.D. in 2013, in automation and robotics all from the Universidad Polit\'ecnica de Madrid, Madrid, Spain.
He is Assistant Professor in Cognitive Robotics at Delft University of Technology. He has coordinated and participated in European projects on cognitive robotics and factories of the future. His research interests include self-adaptive systems, knowledge representation and reasoning, model-based system engineering.
\end{IEEEbiography}
\vspace{-15mm}

\begin{IEEEbiography}[{\includegraphics[width=1in,height=1.25in,clip,keepaspectratio]{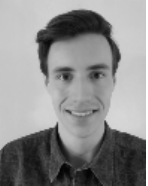}}]{Stefan Bonhof}
is born in Rotterdam, The Netherlands in 1996. He received his B.Sc. in Mechanical Engineering in 2018 and his M.Sc. (with Hons.) in Vehicle Engineering (specializing in autonomy) in 2020, both from the Delft University of Technology. He is currently team lead for students following the MSc Robotics program doing their thesis at AIRLab Delft, as well as technical support and robotics engineer for the whole lab.
\end{IEEEbiography}
\vspace{-12mm}

\begin{IEEEbiography}[{\includegraphics[width=1in,height=1.25in,clip,keepaspectratio]{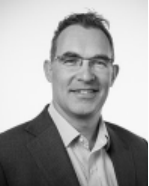}}]{Martijn Wisse}
received the M.Sc. and Ph.D.
degrees in mechanical engineering from the
Delft University of Technology, Delft, The
Netherlands, in 2000 and 2004, respectively.
He is currently a Professor at the Delft University of Technology. His previous research focused on passive dynamic walking robots and passive stability in the field of robot
manipulators. He worked on underactuated
grasping, open-loop stable manipulator control, design of robotic systems, and the creation of startups in this field. His current research interests focus on the neuroscientific principle of active inference and its application and advancements in robotics. 
\end{IEEEbiography}

\end{document}